\providecommand{\tabularnewline}{\\}
\newcommand\eqref[1]{(\ref{#1})}
\title{Distance Transform Gradient Density Estimation using the Stationary
Phase Approximation}
\author{Karthik S. Gurumoorthy\footnotemark[3] 
\thanks{Email: {\tt sgk@ufl.edu}}
\and Anand Rangarajan\footnotemark[3]
\thanks{This work is partially supported by NSF IIS 1143963. Email: {\tt anand@cise.ufl.edu}
}}
\begin{document}
\maketitle
\renewcommand{\thefootnote}{\fnsymbol{footnote}}
\footnotetext[3]{Department of Computer and Information Science and Engineering,
University of Florida, Gainesville, Florida, USA}

\begin{abstract}
The complex wave representation (CWR) converts unsigned 2D distance
transforms into their corresponding wave functions. Here, the distance
transform $S(X)$ appears as the phase of the wave function $\phi(X)$---specifically,
$\phi(X)=\exp\left(\frac{iS(X)}{\tau}\right)$ where $\tau$ is a
free parameter. In this work, we prove a novel result using the higher-order
stationary phase approximation: we show convergence of the normalized
power spectrum (squared magnitude of the Fourier transform) of the
wave function to the density function of the distance transform gradients
as the free parameter $\tau\rightarrow0$. In colloquial terms, \emph{spatial
frequencies are gradient histogram bins}. Since distance transform
gradients carry only orientation information (as their magnitudes are
identically equal to one almost everywhere), 
the 2D Fourier transform values mainly lie on the unit circle in the
spatial frequency domain as $\tau\rightarrow0$. The proof of the result
involves standard 
integration techniques and requires proper ordering of limits. Our
mathematical relation indicates that the CWR of distance transforms
is an intriguing, new representation. \end{abstract}
\begin{keywords}
Stationary phase approximation; distance transform; gradient density;
Fourier transform, complex wave representation\end{keywords}
\begin{AMS}
42B10; 41A60 
\end{AMS}

\pagestyle{myheadings}
\thispagestyle{plain}
\markboth{KARTHIK S. GURUMOORTHY AND ANAND RANGARAJAN}{DISTANCE TRANSFORM DENSITY ESTIMATION}

\section{Introduction}

Euclidean distance functions (more popularly referred
to as distance transforms) are widely used in many domains \cite{Osher02,Siddiqi08}.
An important subset---point-set based distance functions---also finds application in many domains
with computer vision being a prominent example \cite{Siddiqi08,Gurumoorthy09,Sethi12}. 
Since distance transforms allow us to transition
from shapes to a scalar field, problems such as shape
registration are often couched in terms of rigid, affine or nonrigid
alignment of distance transform fields, where the shapes are parameterized as a set of points \cite{Paragios03}. 
In medical imaging, they are used in the construction of neuroanatomical shape complex atlases based on an information geometry framework \cite{Chen10}.

Even when one begins with a set of closed
curves (as a shape template for example), the curves are often discretized
to yield a point-set prior to the application of fast sweeping \cite{Zhao05} and other
distance transform estimation methods. Signed and unsigned distance transforms are deployed in 3D as well
with their zero level-sets corresponding to surfaces. Furthermore,
medial axis methods and skeletonization often involve distance transform representations \cite{Kimmel03}.
In the domain of computer vision, the gradient density function is popularly
known as the histogram of oriented gradients (HOG). 
Since the advent of HOG a few years ago, gradient density estimation has risen
in prominence and is employed in human recognition systems \cite{Dalal05}.

The distance transform for a set of $K$ discrete
points $Y=\{Y_{k}\in\mathbb{R}^{D}\},k\in\{1,\ldots,K\}$ where $D$
is the dimensionality of the point-set is defined as 
\begin{equation}
S(X)\equiv\min_{k}\|X-Y_{k}\|,\label{eq:EucdistProb}
\end{equation}
 where $X\in\Omega$ is a closed bounded domain in $\mathbb{R}^{D}$.
In this article, we are only concerned with $D=2$.

In computational geometry, Euclidean distance functions correspond to the Voronoi problem \cite{deBerg08} and the solution
$S(X)$ can be visualized as a set of cones (with the centers being
the point-set locations $\{Y_{k}\}$). The distance transform satisfies
the static, non-linear Hamilton-Jacobi equation 
\begin{equation}
\|\nabla S\|=1\label{eq:gradNorm}
\end{equation}
 almost everywhere, barring the point-set locations and the Voronoi
boundaries where it is not differentiable \cite{Osher02,Osher88,Sethian96}.
Here $\nabla S=(S_{x},S_{y})$ denotes the gradients of
$S$ and $\|\cdot\|$ represents its Euclidean magnitude. Furthermore
$S(X)=0$ at the point-set locations. Following the wave optics literature,
one can envisage light waves simultaneously emanating from the given
point sources and propagating with a velocity of one in all directions.
The value of $S$ at a grid point $X_{0}$, namely $S(X_{0})$, corresponds
to the \emph{time taken} by the first light wave (out of the $K$
light waves) to reach the grid location $X_{0}$.  Driven by this optics
analogy, when we express $S$ as the phase of a wave function $\phi$
as in 
\begin{equation}
\phi=\exp\left(\frac{iS}{\tau}\right),\label{eq:phaseRelation}
\end{equation}
we made an intriguing empirical observation. The power spectrum of the wave
function approximates the density function of the gradients of the distance
transform as  
the parameter $\tau$ in Equation~\ref{eq:phaseRelation} tends to zero. In this paper, we formally prove this result.
We refer to this wave function $\phi$ which satisfies the phase relation
with $S$ as the \emph{Complex Wave Representation} (CWR) of distance transforms.

\section{Main Contribution}
The centerpiece of this work is to provide a useful application of
the stationary phase method, wherein we show an equivalence between
the density function of the gradients of the distance function $\nabla S=(S_{x},S_{y})$
and the power spectrum (squared magnitude of the Fourier transform)
of the CWR ($\phi$) as the free parameter $\tau$ (in Equation~\ref{eq:phaseRelation})
approaches zero. Here, the density function of the gradients is obtained
via a random variable transformation of a uniformly distributed random
variable $W$ (over the bounded domain $\Omega$) using the gradients
$\nabla S=(S_{x},S_{y})$ as the transformation functions.
In other words, if we define a random variable $Z=\nabla S(W)$
where the random variable $W$ has a \emph{uniform distribution} on
a closed bounded domain $\Omega\subset\mathbb{R}^{2}$, the density
function of $Z$ represents the density function of the gradients
of the distance transform.

As the norm of the gradients $\nabla S$ is defined to be
$1$ almost everywhere (from Equation~\ref{eq:gradNorm}), we observe
that the density function of the gradients is one-dimensional and
defined over the space of orientations. Section~\ref{sec:densityfunction}
provides a closed-form expression for this density function. As the
gradients are unit vectors, we notice that the Fourier transform values
of the CWR ($\phi$) lie mainly on the unit circle and this behavior
tightens as $\tau\rightarrow0$. Specifically, if $F_{\tau}(\tilde{r},\omega)$
represents the Fourier transform of $\phi$ in the polar coordinate
system at a given value of $\tau$, Theorem~\ref{CircleTheorem}
demonstrates that if $\tilde{r}\not=1$, then $\lim_{\tau\rightarrow0}F_{\tau}(\tilde{r},\omega)=0$.

Our main result is established in Theorem~\ref{IntegralLimitTheorem}
where we show that the power spectrum of the wave function $\phi$
when polled close to the unit circle, is approximately equal to the
density function of the distance transform gradients, with the approximation
becoming increasingly exact as $\tau\rightarrow0$. In other words,
if $P(\omega)$ denotes the closed-form density of the gradients defined
over the orientation $\omega$ and if $P_{\tau}(\tilde{r},\omega)$
corresponds to the power spectrum of $\phi$ represented in the polar
coordinate system at a given value of $\tau$, Theorem~\ref{IntegralLimitTheorem}
constitutes the following relation 
\begin{equation}
\lim_{\delta\rightarrow0}\lim_{\tau\rightarrow0}\int_{\omega_{0}}^{\omega_{0}+\Delta}\left\{ \int_{1-\delta}^{1+\delta}P_{\tau}(\tilde{r},\omega)\tilde{r}d\tilde{r}\right\} d\omega=\int_{\omega_{0}}^{\omega_{0}+\Delta}P(\omega)d\omega
\end{equation}
 for any (small) value of the interval measure $\Delta$ on $\omega$.
We show this result using the higher-order \emph{stationary
phase approximation}, a well known technique in asymptotic analysis
\cite{Wong89}. Through the pioneering works of Jones and Kline \cite{Jones58},
Olver \cite{OlverBook74}, Wong \cite{Wong89}, McClure and Wong \cite{McClure91},
among others, the stationary phase approximation has become a widely
deployed tool in the approximation of oscillatory integrals. Our work
showcases a novel application of the stationary phase method for estimating
the probability density function of distance transform gradients.
The significance of our mathematical result is that \emph{spatial
frequencies become histogram bins} and hence the power spectrum $P_{\tau}$
can serve as a gradient density estimator at small, non-zero values
of $\tau$. We would like to emphasize that our work is \emph{fundamentally
different} from estimating the gradients of a density function \cite{Fukunaga75}
and should not be semantically confused with it.

\subsection{Motivation from quantum mechanics}

Our new mathematical relationship is motivated by the classical-quantum
relation, wherein classical physics is expressed as a limiting case
of quantum mechanics \cite{Griffiths04,Feynman2010}. When $S$ is treated
as the Hamilton-Jacobi scalar field, the gradients of $S$ correspond
to the classical momentum of a particle \cite{Goldstein02}. In the
parlance of quantum mechanics, the squared magnitude of the wave function
expressed either in its position or momentum basis corresponds to
its position or momentum density respectively. Since these representations
(either in the position or momentum basis) are simply (suitably scaled)
Fourier transforms of each other, the squared magnitude of the Fourier
transform of the wave function expressed in its position basis is
its quantum momentum density. However, the time independent Schr\"odinger
wave function $\phi(x,y)$ (expressed in its position basis) can be
approximated by $\exp\left(\frac{iS(x,y)}{\tau}\right)$ as $\tau\rightarrow0$
\cite{Feynman2010}. Here $\tau$ (treated as a free parameter in our work)
represents 
Planck's constant. Hence the squared magnitude of the Fourier transform
$\exp\left(\frac{iS(x,y)}{\tau}\right)$ corresponds to the quantum
momentum density of $S$. The principal results proved in the article
(Theorem~\ref{IntegralLimitTheorem} and Proposition~\ref{IntegrationOverrtilde})
state that the classical momentum density (denoted by $P$) can be
expressed as a limiting case (as $\tau\rightarrow0$) of its corresponding
quantum momentum density (denoted by $P_{\tau}$), in agreement with
the correspondence principle.

\section{The Distance Transform Gradient Density Function\label{sec:densityfunction}}

As mentioned above, the geometry of the distance transform corresponds
to a set of intersecting cones with the origins at the Voronoi centers
\cite{deBerg08}. The gradients of the distance transform (which exist
globally except at the cone intersections and origins) are unit vectors
and satisfy Equation~\ref{eq:gradNorm}. Therefore the gradient density
function is one-dimensional and defined over the space of \emph{orientations}.
The orientations are constant and unique along each ray of each cone.
Its probability distribution function is given by 
\begin{equation}
\mathcal{F}(\theta\leq\Theta\leq\theta+\Delta)\equiv\frac{1}{L}\int\int_{\theta\leq\arctan\left(\frac{S_{y}}{S_{x}}\right)\leq\theta+\Delta}dxdy\label{def:distributionfunc}
\end{equation}
where $L$ is the area of the bounded domain $\Omega$. We have expressed the orientation random variable as $\Theta=\arctan\left(\frac{S_{y}}{S_{x}}\right)$.
The probability distribution function also induces a \emph{closed-form
expression} for its density function as shown below.

Let $\Omega\subset\mathbb{R}^{2}$ denote a polygonal grid such that
its boundary $\partial\Omega$ is composed of a finite sequence of
straight line segments. The reason for restricting only to polygonal
domains with boundaries made of line segments will become clear when
we discuss Theorem~\ref{CircleTheorem}. Let the set $Y=\{Y_{k}\in\mathbb{R}^{2},k\in\{1,\ldots,K\}\}$
be the given point-set locations and let $Y_{k}=(x_{k},y_{k})$. Then
the Euclidean distance transform at a point $X=(x,y)\in\Omega$ is
given by 
\begin{equation}
S(X)\equiv\min_{k}\|X-Y_{k}\|=\min_{k}(\sqrt{(x-x_{k})^{2}+(y-y_{k})^{2}}).\label{def:distanceProblem}
\end{equation}
 Let $\mathcal{D}_{k}$, \emph{centered} at $Y_{k}$, denote the $k^{th}$
Voronoi region corresponding to the input point $Y_{k}$. $\mathcal{D}_{k}$
can be represented by a Cartesian product $[0,2\pi)\times[0,R_{k}(\theta)]$
where $R_{k}(\theta)$ is the length of the ray of the $k^{th}$ cone
at an orientation $\theta$. If a grid point $X=(x,y)\in\left(Y_{k}+\mathcal{D}_{k}\right)$,
then $S(X)=\|X-Y_{k}\|$. Each $\mathcal{D}_{k}$ is a convex polygon
whose boundary $\partial\mathcal{D}_{k}$ is also composed of a finite
sequence of straight line segments as shown in Figure~\ref{fig:Voronoi}.

\noindent \begin{center}
\begin{figure}[ht!]
\begin{centering}
\includegraphics[width=0.5\textwidth]{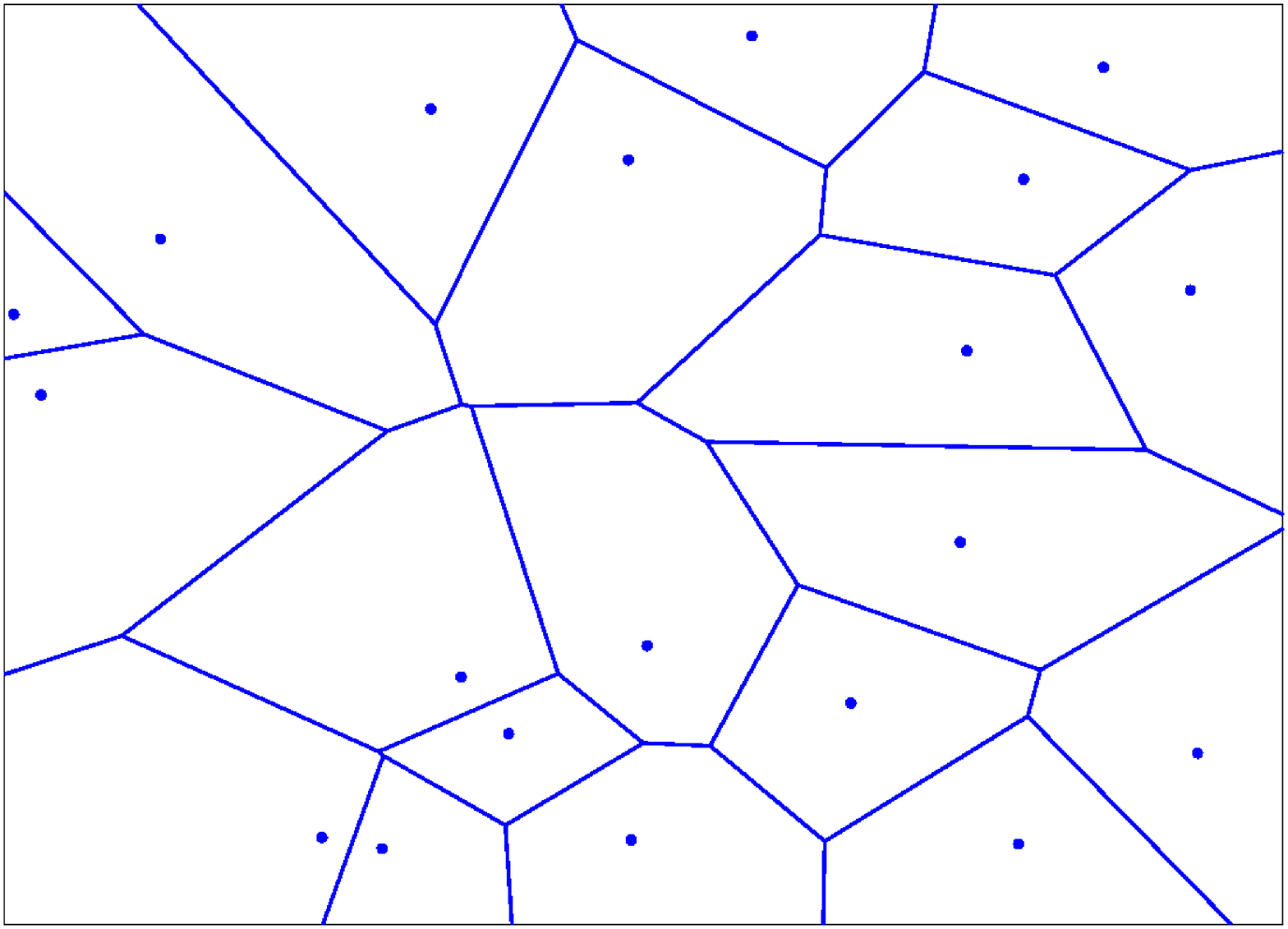} 
\par\end{centering}

\caption{Voronoi diagram of the given $K$ points. Each Voronoi boundary is
composed of straight line segments.}

\label{fig:Voronoi} 
\end{figure}

\par\end{center}

Note that even for points that lie on the Voronoi boundary where the
radial length equals $R_{k}(\theta)$, the distance transform is well
defined. The area $L$ of the polygonal grid $\Omega$ is given by
\begin{equation}
L\equiv\sum_{k=1}^{K}\int_{0}^{2\pi}\int_{0}^{R_{k}(\theta)}rdrd\theta=\sum_{k=1}^{K}\int_{0}^{2\pi}\frac{R_{k}^{2}(\theta)}{2}d\theta.\label{eq:L}
\end{equation}
 With the above set-up in place, after recognizing the cone geometry
at each Voronoi center $Y_{k}$, Equation~\ref{def:distributionfunc}
can be simplified as 
\begin{equation}
\mathcal{F}(\theta\leq\Theta\leq\theta+\Delta)\equiv\frac{1}{L}\sum_{k=1}^{K}\int_{\theta}^{\theta+\Delta}\int_{0}^{R_{k}(\theta)}rdrd\theta=\frac{1}{L}\sum_{k=1}^{K}\int_{\theta}^{\theta+\Delta}\frac{R_{k}^{2}(\theta)}{2}d\theta.
\end{equation}
 Following this drastic simplification, we can write the closed-form
expression for the density function of the unit vector distance transform
gradients as 
\begin{equation}
P(\theta)\equiv\lim_{\Delta\rightarrow0}\frac{\mathcal{F}(\theta\leq\Theta\leq\theta+\Delta)}{\Delta}=\frac{1}{L}\sum_{k=1}^{K}\frac{R_{k}^{2}(\theta)}{2}.\label{def:densityfunc}
\end{equation}
 Based on the expression for $L$ in Equation~\ref{eq:L}, it is easy
to see that 
\begin{equation}
\int_{0}^{2\pi}P(\theta)d\theta=1.\label{eq:PInt}
\end{equation}
 Since the Voronoi cells are convex polygons \cite{deBerg08}, each
cell contributes exactly one conical ray to the density function on
orientation.

\section{Properties of the Fourier Transform of the CWR\label{sec:FTProperties}}

Since the distance transform is not differentiable at the point-set
locations $\{Y_{k}\}_{k=1}^{K}$ and also along the Voronoi boundaries
$\partial\mathcal{D}_{k},\forall k$ (a measure zero set in 2D), we
restrict ourselves to the region which excludes both of them. To this
end, let $0<\epsilon<\frac{1}{2}$ be given. Let the region $\mathcal{D}_{k}^{\epsilon}$
centered at $Y_{k}$ be represented by the Cartesian product $[0,2\pi)\times[R_{k}^{(1)}(\theta),R_{k}^{(2)}(\theta)]$
where, 
\begin{eqnarray}
R_{k}^{(1)}(\theta) & = & \epsilon R_{k}(\theta)\hspace{10pt}\mbox{and}\nonumber \\
R_{k}^{(2)}(\theta) & = & (1-\epsilon)R_{k}(\theta).
\end{eqnarray}
 The length of the ray at the orientation $\theta$ in $\mathcal{D}_{k}^{\epsilon}$
equals $R_{k}^{(2)}(\theta)-R_{k}^{(1)}(\theta)$. Note that in the
definition of $\mathcal{D}_{k}^{\epsilon}$, we have explicitly removed
the source point $Y_{k}$ where the ray length $r(\theta)=0$ and
the boundary of the Voronoi cell where $r(\theta)=R_{k}(\theta)$
as shown in Figure~\ref{fig:ExcludeVBoun}. 

\noindent \begin{center}
\begin{figure}[ht!]
\begin{centering}
\includegraphics[width=0.4\textwidth]{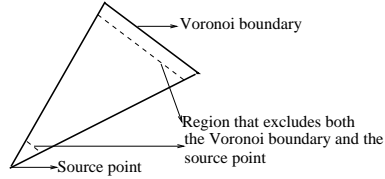} 
\par\end{centering}

\caption{Region that excludes both the source point and the Voronoi boundary.}

\label{fig:ExcludeVBoun} 
\end{figure}

\par\end{center}

\noindent Define the grid 
\begin{equation}
\Omega^{\epsilon}\equiv\bigcup_{k=1}^{K}\left(Y_{k}+\mathcal{D}_{k}^{\epsilon}\right).
\end{equation}
 Its area $L^{\epsilon}$ equals 
\begin{equation}
L^{\epsilon}\equiv\sum_{k=1}^{K}\int_{0}^{2\pi}\int_{R_{k}^{(1)}(\theta)}^{R_{k}^{(2)}(\theta)}rdrd\theta=(1-2\epsilon)\sum_{k=1}^{K}\int_{0}^{2\pi}\frac{R_{k}^{2}(\theta)}{2}d\theta.
\end{equation}
 From Equation~\ref{eq:L} we get $L^{\epsilon}=(1-2\epsilon)L$
and hence $\lim_{\epsilon\rightarrow0}L^{\epsilon}=L$.

\noindent Let $l^{\epsilon}=\sqrt{L^{\epsilon}}$. Define a function
$F^{\epsilon}:\mathbb{R}\times\mathbb{R}\times\mathbb{R}\rightarrow\mathbb{C}$
as 
\begin{equation}
F^{\epsilon}(u,v,\tau)\equiv\frac{1}{2\pi\tau l^{\epsilon}}\iint\limits _{\Omega^{\epsilon}}\exp\left(\frac{iS(x,y)}{\tau}\right)\exp\left(\frac{-i(ux+vy)}{\tau}\right)dxdy.\label{eq:Fuvhbar}
\end{equation}
 For a fixed value of $\tau$, define a function $F_{\tau}^{\epsilon}:\mathbb{R}\times\mathbb{R}\rightarrow\mathbb{C}$
as 
\begin{equation}
F_{\tau}^{\epsilon}(u,v)\equiv F^{\epsilon}(u,v,\tau).\label{eq:Fhbar}
\end{equation}
 Note that $F_{\tau}^{\epsilon}$ is closely related to the Fourier
transform of the CWR, $\phi=\exp\left(\frac{iS}{\tau}\right)$ \cite{Bracewell99}.
The scale factor $\frac{1}{2\pi\tau l^{\epsilon}}$ is the normalization
factor such that the $\ell_{2}$ norm of $F_{\tau}^{\epsilon}$ is 1 as
seen in the following Lemma (with the proof given in Appendix~\ref{sec:proofIntLemma}). 
\begin{lemma}
\label{IntegralLemma} With $F_{\tau}^{\epsilon}$ defined as above,
$F_{\tau}^{\epsilon}\in L^{2}(\mathbb{R}^{2})$ and $\|F_{\tau}^{\epsilon}\|=1$.
\\

\end{lemma}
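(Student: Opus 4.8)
The plan is to recognize $F_\tau^\epsilon$ as a rescaled Fourier transform of a windowed copy of the CWR and then invoke Plancherel's identity. First I would introduce the windowed wave function $\psi(x,y)\equiv\exp\left(\frac{iS(x,y)}{\tau}\right)\mathbf{1}_{\Omega^\epsilon}(x,y)$, where $\mathbf{1}_{\Omega^\epsilon}$ is the indicator of the region $\Omega^\epsilon$. Since a complex exponential has unit modulus, $|\psi(x,y)|^2=\mathbf{1}_{\Omega^\epsilon}(x,y)$ pointwise, so $\psi$ is square-integrable with $\iint|\psi|^2\,dx\,dy=\mathrm{Area}(\Omega^\epsilon)=L^\epsilon=(l^\epsilon)^2$. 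Because $\psi$ has compact support and finite $L^2$ norm, its Fourier transform exists and also lies in $L^2(\mathbb{R}^2)$.

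Next I would express the defining integral as a scaled transform. Writing the (unnormalized) Fourier transform as $\hat\psi(\xi,\eta)=\iint\psi(x,y)e^{-i(\xi x+\eta y)}\,dx\,dy$, the exponent $\frac{-i(ux+vy)}{\tau}$ equals $-i\left(\frac{u}{\tau}x+\frac{v}{\tau}y\right)$, so the definition of $F^\epsilon$ gives $F_\tau^\epsilon(u,v)=\frac{1}{2\pi\tau l^\epsilon}\hat\psi\!\left(\frac{u}{\tau},\frac{v}{\tau}\right)$. This representation immediately yields $F_\tau^\epsilon\in L^2(\mathbb{R}^2)$, since $\hat\psi\in L^2$ and precomposition with an invertible linear rescaling preserves square-integrability.

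To compute the norm I would apply Plancherel's theorem, which in this convention reads $\iint|\hat\psi(\xi,\eta)|^2\,d\xi\,d\eta=(2\pi)^2\iint|\psi(x,y)|^2\,dx\,dy=(2\pi)^2(l^\epsilon)^2$. Substituting $\xi=u/\tau$, $\eta=v/\tau$ (so that $du\,dv=\tau^2\,d\xi\,d\eta$) into $\|F_\tau^\epsilon\|^2=\frac{1}{(2\pi\tau l^\epsilon)^2}\iint\left|\hat\psi\!\left(\frac{u}{\tau},\frac{v}{\tau}\right)\right|^2\,du\,dv$, the Jacobian $\tau^2$ cancels the $\tau^2$ in the prefactor, and the surviving $(2\pi)^2(l^\epsilon)^2$ cancels against $(2\pi l^\epsilon)^2$, leaving $\|F_\tau^\epsilon\|^2=1$.

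The argument is essentially routine once the scaling is set up, and I do not anticipate any genuine analytic obstruction: $\psi$ has compact support and unit modulus, so all integrals converge trivially. The only point demanding care is the bookkeeping of constants, namely ensuring that the $2\pi$ factor appearing in Plancherel's identity (which is convention-dependent) is matched by the $\frac{1}{2\pi}$ in the prefactor, and that the frequency rescaling by $\tau$ supplies exactly the Jacobian needed to absorb the $\tau$ in the normalization. The entire content of the lemma is that the factor $\frac{1}{2\pi\tau l^\epsilon}$ has been chosen precisely so that the power spectrum $|F_\tau^\epsilon|^2$ integrates to one and is therefore a bona fide probability density on the spatial-frequency plane.
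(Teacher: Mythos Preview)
Your proposal is correct and follows essentially the same route as the paper: both introduce the windowed wave function (the paper's $f=H\exp(iS/\tau)$ is your $\psi$), recognize $F_\tau^\epsilon$ as its Fourier transform evaluated at the rescaled frequencies $(u/\tau,v/\tau)$, and apply Parseval/Plancherel together with the change of variables $\xi=u/\tau$, $\eta=v/\tau$ so that the Jacobian $\tau^2$ cancels the $\tau^2$ in the normalization. The bookkeeping of the $2\pi$ and $l^\epsilon$ factors is handled identically.
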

Consider the polar representation of the spatial frequencies $(u,v)$
namely $u=\tilde{r}\cos(\omega)$ and $v=\tilde{r}\sin(\omega)$ where
$\tilde{r}>0$. For $(x,y)\in\left(Y_{k}+\mathcal{D}_{k}^{\epsilon}\right)$,
let $x-x_{k}=r\cos(\theta)$ and $y-y_{k}=r\sin(\theta)$ where $r\in[R_{k}^{(1)}(\theta),R_{k}^{(2)}(\theta)]$.
Then Equation~\ref{eq:Fuvhbar} can be rewritten as 
\begin{equation}
F_{\tau}^{\epsilon}(\tilde{r},\omega)=\sum_{k=1}^{K}C_{k}I_{k}(\tilde{r},\omega)
\end{equation}
 where 
\begin{equation}
C_{k}=\exp\left\{ -\frac{i}{\tau}\left[\tilde{r}\cos(\omega)x_{k}+\tilde{r}\sin(\omega)y_{k}\right]\right\} 
\end{equation}
 and 
\begin{equation}
I_{k}(\tilde{r},\omega)=\frac{1}{2\pi\tau l^{\epsilon}}\int_{0}^{2\pi}\int_{R_{k}^{(1)}(\theta)}^{R_{k}^{(2)}(\theta)}\exp\left\{ \frac{i}{\tau}r\left[1-\tilde{r}\cos(\theta-\omega)\right]\right\} rdrd\theta.\label{def:Ik}
\end{equation}

With the above set-up in place, we have the following theorem, namely, 
\begin{theorem}
{[}Circle Theorem{]} \label{CircleTheorem} If $\tilde{r}\not=1$,
then, 
\begin{equation}
\lim_{\tau\rightarrow0}F_{\tau}^{\epsilon}(\tilde{r},\omega)=0,
\end{equation}
 for any $0<\epsilon<\frac{1}{2}$. 
\end{theorem}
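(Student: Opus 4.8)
The plan is to reduce the statement to showing that each summand $I_k(\tilde{r},\omega)\to 0$, and then to treat $I_k$ as an oscillatory integral whose phase has \emph{no stationary point} when $\tilde{r}\neq 1$. Since $F_\tau^\epsilon=\sum_{k=1}^K C_k I_k$ with $|C_k|=1$ and $K$ finite, it suffices to prove $\lim_{\tau\to0}I_k(\tilde{r},\omega)=0$ for each $k$. Write the phase in \eqref{def:Ik} as $\Phi(r,\theta)=r\,g(\theta)$ with $g(\theta)=1-\tilde{r}\cos(\theta-\omega)$, so that $I_k=\frac{1}{2\pi\tau l^\epsilon}\iint_{\mathcal{D}_k^\epsilon}e^{i\Phi/\tau}\,r\,dr\,d\theta$. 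The decisive observation is the gradient computation $\nabla\Phi=\bigl(g(\theta),\;r\tilde{r}\sin(\theta-\omega)\bigr)$: both components vanish simultaneously only if $\cos(\theta-\omega)=1/\tilde{r}$ and $\sin(\theta-\omega)=0$, which forces $\tilde{r}=1$. Because the $\epsilon$-excision keeps $r$ bounded away from $0$ and $\overline{\mathcal{D}_k^\epsilon}$ is compact, $|\nabla\Phi|$ attains a strictly positive minimum on $\overline{\mathcal{D}_k^\epsilon}$ whenever $\tilde{r}\neq1$. Thus $\Phi$ is a non-stationary phase, and the entire difficulty is to convert this into decay despite the $1/\tau$ prefactor.

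Since $\Phi$ is linear in $r$, the natural first move is a single integration by parts in $r$. On every ray with $g(\theta)\neq0$ this is exact, and the factor $\tau$ it produces cancels the $1/\tau$ in \eqref{def:Ik}, leaving
\[
I_k=\frac{1}{2\pi i\,l^\epsilon}\int_0^{2\pi}\frac{1}{g(\theta)}\Bigl[R_k^{(2)}(\theta)e^{iR_k^{(2)}(\theta)g(\theta)/\tau}-R_k^{(1)}(\theta)e^{iR_k^{(1)}(\theta)g(\theta)/\tau}\Bigr]d\theta+O(\tau),
\]
where the remainder is controlled (for $\tilde{r}<1$, $|g|\ge 1-\tilde{r}$). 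The surviving integral is a one-dimensional oscillatory integral in $\theta$ with phases $\Psi^{(j)}(\theta)=R_k^{(j)}(\theta)g(\theta)$. Here I would invoke the polygonal hypothesis: each boundary curve $r=R_k^{(j)}(\theta)$ is a scaled Voronoi-cell boundary, so on the angular interval subtended by a single edge one has $R_k(\theta)=p/\cos(\theta-\beta)$ for constants $p,\beta$ determined by that edge. Consequently $\Psi^{(j)}$ is real-analytic and non-constant there, its critical set is finite, and the method of stationary phase (van der Corput) gives a bound $O(\tau^{1/2})$ on each edge. The full angular integral, a finite sum over edges, therefore tends to $0$, and combined with the $O(\tau)$ remainder this yields $I_k\to0$. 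This is precisely where restricting $\partial\Omega$ and $\partial\mathcal{D}_k$ to line segments is used: it guarantees piecewise-analytic, non-degenerate boundary phases.

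The main obstacle is twofold, and both parts concern the interplay with the $1/\tau$ factor. First, any crude modulus estimate on \eqref{def:Ik} gives only $I_k=O(1)$, so the proof must extract \emph{exactly} one power of $\tau$ and then genuinely exploit oscillation in the leftover angular integral. Second, the radial integration by parts degenerates at the critical angles where $g(\theta)=0$, which occur precisely when $\tilde{r}>1$ (at $\theta=\omega\pm\arccos(1/\tilde{r})$); there the amplitude $1/g$ blows up and the displayed identity is invalid. The remedy is that at those angles the \emph{other} component of $\nabla\Phi$ is bounded away from zero, since $\sin(\theta-\omega)=\pm\sqrt{1-\tilde{r}^{-2}}\neq0$. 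Hence on a shrinking $\theta$-neighborhood of each critical angle one integrates by parts in $\theta$ rather than in $r$; equivalently, one performs the single integration by parts along the everywhere-nonzero direction $\nabla\Phi/|\nabla\Phi|^2$, which handles $\tilde{r}<1$ and $\tilde{r}>1$ uniformly. Balancing the neighborhood width against $\tau$ then shows the degenerate arcs contribute $o(1)$, completing the argument.
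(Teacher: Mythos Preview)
Your proposal is correct and follows essentially the same route as the paper: non-stationary phase via a single integration by parts along the direction $\nabla\Phi/|\nabla\Phi|^{2}$ (the paper packages this as the divergence theorem applied to the vector field $\mathbf{u}=\tfrac{\nabla p}{\|\nabla p\|^{2}}\,r$), followed by a one-dimensional stationary-phase bound on the resulting boundary integral, with the polygonal hypothesis used exactly as you say to exclude intervals on which the boundary phase is constant. The only difference is expository: you first try a purely radial integration by parts, observe that it breaks down at the zeros of $g(\theta)=1-\tilde{r}\cos(\theta-\omega)$ when $\tilde{r}>1$, and then arrive at the $\nabla\Phi/|\nabla\Phi|^{2}$ direction as the remedy, whereas the paper adopts that vector field from the outset and thereby treats the cases $\tilde{r}<1$ and $\tilde{r}>1$ uniformly.
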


\subsection{An Intuitive Examination of Theorem~\ref{CircleTheorem}}

Before we furnish a rigorous proof for the aforementioned theorem,
we provide an intuitive picture of why the statement is true. Observe
that the first exponential $\exp\left(\frac{iS(x,y)}{\tau}\right)$
in Equation~\ref{eq:Fuvhbar} is a varying complex \textquotedbl{}sinusoid\textquotedbl{}
and the second exponential $\exp\left(\frac{-i(ux+vy)}{\tau}\right)$
in Equation~\ref{eq:Fuvhbar} is a fixed complex sinusoid at frequencies
$\frac{u}{\tau}$ and $\frac{v}{\tau}$ along the $x$- and $y$-coordinate axes respectively. When we multiply
these two complex exponentials, at low values of $\tau$, the two
sinusoids are usually not \textquotedbl{}in sync\textquotedbl{} and
cancellations occur in the integral. Exceptions to the cancellation
happen at locations where $\nabla S=(S_{x},S_{y})=(u,v)$,
as around these locations, the two sinusoids are in perfect sync.
Since $\|\nabla S\|=1$ for distance transforms, strong
resonance occurs only when $u^{2}+v^{2}=1$ ($\tilde{r}=1$). When
$\tilde{r}\not=1$, the two sinusoids tend to \emph{cancel} each other
out as $\tau\rightarrow0$, resulting in $F_{\tau}^{\epsilon}$ becoming
zero at these locations.

\subsection{Proof of Theorem~\ref{CircleTheorem}}

Having given an intuitive picture of why Theorem~\ref{CircleTheorem}
holds true, we now proceed with the formal proof. As each $C_{k}$
is bounded, it suffices to show that if $\tilde{r}\not=1$, then $\lim_{\tau\rightarrow0}I_{k}(\tilde{r},\omega)=0$
for all $I_{k}$. 
\begin{proof}
\noindent Consider the integral 
\begin{equation}
I(\tilde{r},\omega)=\frac{1}{2\pi\tau l^{\epsilon}}\int_{0}^{2\pi}\int_{R^{(1)}(\theta)}^{R^{(2)}(\theta)}\exp\left\{ \frac{i}{\tau}r\left[1-\tilde{r}\cos(\theta-\omega)\right]\right\} rdrd\theta,\label{def:I}
\end{equation}
 where $R^{(1)}(\theta)=\epsilon R(\theta)$ and $R^{(2)}(\theta)=(1-\epsilon)R(\theta)$.
Let the region $[0,2\pi)\times[R^{(1)}(\theta),R^{(2)}(\theta)]$
be denoted by $\mathcal{D}^{\epsilon}$. $R(\theta)$ is defined in
such a way that the boundary of $D^{\epsilon}$ consists of a finite
sequence of straight line segments as in the case of each $\mathcal{D}_{k}^{\epsilon}$.
Notice that $\mathcal{D}^{\epsilon}$ doesn't contain the origin $(0,0)$.
In order to prove Theorem~\ref{CircleTheorem}, it is sufficient
to show that $\lim_{\tau\rightarrow0}I(\tilde{r},\omega)=0$.

Let $p(r,\theta;\tilde{r},\omega)=r(1-\tilde{r}\cos(\theta-\omega))$
denote the phase term of $I$ in Equation~\ref{def:I} for a given
$\tilde{r}$ and $\omega$. The partial derivatives of $p(r,\theta;\tilde{r},\omega)$
(with $\tilde{r}$ and $\omega$ held fixed) are given by 
\begin{equation}
\frac{\partial p}{\partial r}=1-\tilde{r}\cos(\theta-\omega),\hspace{10pt}\frac{\partial p}{\partial\theta}=r\tilde{r}\sin(\theta-\omega).
\end{equation}
 Since $\mathcal{D}^{\epsilon}$ is bounded away from the origin $(0,0)$,
$\nabla p$ is well-defined and bounded and equals zero
only when $\tilde{r}=1$ and $\theta=\omega$. Since $\tilde{r}\not=1$
by assumption, no stationary point exists ($\nabla p\not=0$)
and hence we can expect $I(\tilde{r},\omega)\rightarrow0$ as $\tau\rightarrow0$
\cite{Cooke82,Jones58,Wong81}. Below, we show this result more explicitly.

Define a vector field $\mathbf{u}(r,\theta;\tilde{r},\omega)=\frac{\nabla p}{\|\nabla p\|^{2}}r$
at a fixed value of $\tilde{r}$ and $\omega$. Note that 
\begin{eqnarray}
\nabla\cdot\left[\mathbf{u}(r,\theta;\tilde{r},\omega)\exp\left(\frac{ip(r,\theta;\tilde{r},\omega)}{\tau}\right)\right]
& = &
\left(\nabla\cdot\mathbf{u}(r,\theta;\tilde{r},\omega)\right)\exp\left(\frac{ip(r,\theta;\tilde{r},\omega)}{\tau}\right)\nonumber
\\ 
 & + & \frac{i}{\tau}\exp\left(\frac{ip(r,\theta;\tilde{r},\omega)}{\tau}\right)r\label{eq:urelation}
\end{eqnarray}
 where the gradient operator $\nabla=\left(\frac{\partial}{\partial r},\frac{1}{r}\frac{\partial}{\partial\theta}\right)$.
Inserting Equation~\ref{eq:urelation} in Equation~\ref{def:I},
we get 
\begin{equation}
I(\tilde{r},\omega)=I^{(1)}(\tilde{r},\omega)-I^{(2)}(\tilde{r},\omega),\label{eq:IequalI1minusI2}
\end{equation}
 where 
\begin{eqnarray}
I^{(1)}(\tilde{r},\omega) & = & \frac{1}{2\pi il^{\epsilon}}\iint\limits
_{\mathcal{D}^{\epsilon}}\nabla\cdot\left(\mathbf{u}(r,\theta;\tilde{r},\omega)\exp\left(\frac{ip(r,\theta;\tilde{r},\omega)}{\tau}\right)\right)drd\theta,\,
\mathrm{and}\nonumber
\\ 
I^{(2)}(\tilde{r},\omega) & = & \frac{1}{2\pi il^{\epsilon}}\iint\limits
_{\mathcal{D}^{\epsilon}}\left(\nabla\cdot\mathbf{u}(r,\theta;\tilde{r},\omega)\right)
\exp\left(\frac{ip(r,\theta;\tilde{r},\omega)}{\tau}\right)drd\theta.
\end{eqnarray}
 Consider the integral $I^{(1)}(\tilde{r},\omega)$. From the divergence
theorem, we have 
\begin{equation}
I^{(1)}(\tilde{r},\omega)=\frac{1}{2\pi il^{\epsilon}}\int_{\Gamma}(\mathbf{u}^{T}\mathbf{n})\exp\left(\frac{ip(r,\theta;\tilde{r},\omega)}{\tau}\right)ds
\end{equation}
where $\Gamma$ is the positively oriented boundary of $\mathcal{D}^{\epsilon}$,
$s$ is the arc length of $\Gamma$ and $\mathbf{n}$ is the unit
outward normal of $\Gamma$. The boundary $\Gamma$ consists of two
disjoint regions, one along $r(\theta)=R^{(1)}(\theta)$ and another
along $r(\theta)=R^{(2)}(\theta)$. If the level curves of $p(r,\theta;\tilde{r},\omega)$
are tangential to $\Gamma$ only at a discrete set of locations giving
rise to stationary points of the second kind \cite{Wong89,Wong81,McClure91}---in other words, if $p(r,\theta;\tilde{r},\omega)$ is \emph{not constant}
along the boundary $\Gamma$ for any contiguous interval of $\theta$---then, using the one dimensional stationary phase approximation \cite{OlverBook74,OlverArticle74},
$I^{(1)}(\tilde{r},\omega)$ can be shown to be $O(\sqrt{\tau})$
and hence converges to zero as $\tau\rightarrow0$. Since the boundary
of $\mathcal{D}^{\epsilon}$ is composed of straight line segments
(specifically not arc-like), we can show that the level curves of
$p(r,\theta;\tilde{r},\omega)$ \emph{cannot} overlap with $\Gamma$
for a non-zero finite interval. (The next paragraph takes care of
this technical issue and can be skipped without loss of continuity.)

The level curves of $p(r,\theta;\tilde{r},\omega)$ are given by $R(\theta)(1-\tilde{r}\cos(\theta-\omega))=c$,
where $c$ is a constant. Recall that each of the two disjoint regions
of $\Gamma$ is composed of a finite sequence of line segments. For
the level curves of $p(r,\theta;\tilde{r},\omega)$ to coincide with
$\Gamma$ over a non-zero finite interval, $y(\theta)=R(\theta)\sin(\theta)=\frac{c\sin(\theta)}{1-\tilde{r}\cos(\theta-\omega)}$
and $x(\theta)=R(\theta)\cos(\theta)=\frac{c\cos(\theta)}{1-\tilde{r}\cos(\theta-\omega)}$
should satisfy the line equation $y=mx+b$ for some slope $m$ and
slope-intercept $b$, when $\theta$ varies over some contiguous interval
$\theta\in[\theta_{1},\theta_{2}]$. Plugging in the value of $y(\theta)$
and $x(\theta)$ into the line equation and expanding $\cos(\theta-\omega)$,
we have 
\begin{equation}
c\sin\left(\theta\right)=mc\cos(\theta)+b-b\tilde{r}[\cos(\theta)\cos(\omega)+\sin(\theta)\sin(\omega)].
\end{equation}
 Combining the terms, we get 
\begin{equation}
\sin(\theta)[c+b\tilde{r}\sin(\omega)]-\cos(\theta)[mc-b\tilde{r}\cos(\omega)]=b.
\end{equation}
 By defining $\lambda_{1}\equiv c+b\tilde{r}\sin(\omega)$ and
 $\lambda_{2}\equiv -(mc-b\tilde{r}\cos(\omega))$,
we see that $\sin(\theta)$ and $\cos(\theta)$ need to satisfy the
linear relation 
\begin{equation}
\lambda_{1}\sin(\theta)+\lambda_{2}\cos(\theta)=b\label{eq:sincoslinearrelation}
\end{equation}
 for $\theta\in[\theta_{1},\theta_{2}]$ in order for the level curves
of $p(r,\theta;\tilde{r},\omega)$ to overlap with the piece-wise
linear boundary $\Gamma$. As Equation~\ref{eq:sincoslinearrelation}
cannot be true for a finite interval of $\theta$, $I^{(1)}(\tilde{r},\omega)=O(\sqrt{\tau})$
as $\tau\rightarrow0$ and hence converges to zero in the limit.

Now $I^{(2)}(\tilde{r},\omega)$ has a similar form as the original
$I(\tilde{r},\omega)$ in Equation~\ref{def:I} with $r$ replaced
by $g_{1}(r,\theta;\tilde{r},\omega)=\left(\nabla\cdot\mathbf{u}\right)$.
Letting $\mathbf{u}_{1}(r,\theta;\tilde{r},\omega)=\frac{\nabla p}{\|\nabla p\|^{2}}g_{1}(r,\theta;\tilde{r},\omega)$,
from Equation~\ref{eq:urelation} and the divergence theorem, we get
\begin{eqnarray}
I^{(2)}(\tilde{r},\omega) & = & \frac{-\tau}{2\pi l^{\epsilon}}\int_{\Gamma}(\mathbf{u}_{1}^{T}\mathbf{n})\exp\left(\frac{ip(r,\theta;\tilde{r},\omega)}{\tau}\right)ds\nonumber \\
 & + & \frac{\tau}{2\pi l^{\epsilon}}\iint\limits _{\mathcal{D}^{\epsilon}}\left(\nabla\cdot\mathbf{u}_{1}(r,\theta;\tilde{r},\omega)\right)\exp\left(\frac{ip(r,\theta;\tilde{r},\omega)}{\tau}\right)drd\theta.
\end{eqnarray}
 As $I^{(2)}(\tilde{r},\omega)=O(\tau)$, it converges to zero as
$\tau\rightarrow0$. Applying the obtained results to Equation~\ref{eq:IequalI1minusI2},
we see that $I(\tilde{r},\omega)$ (and also $I_{k}(\tilde{r},\omega)$
defined in Equation~\ref{def:Ik}) $\rightarrow0$ as $\tau\rightarrow0$
which completes the proof. 

\end{proof}

Since Theorem~\ref{CircleTheorem} is true for any $0<\epsilon<\frac{1}{2}$,
it also holds as $\epsilon\rightarrow0$. As a corollary, we have
the following result:
\begin{corollary}
If $\tilde{r}\not=1$, then 
\begin{equation}
\lim_{\epsilon\rightarrow0}\lim_{\tau\rightarrow0}F_{\tau}^{\epsilon}(\tilde{r},\omega)=0.
\end{equation}

\end{corollary}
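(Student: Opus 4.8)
The plan is to read the corollary off directly from Theorem~\ref{CircleTheorem}, with no further oscillatory-integral analysis, by exploiting the fact that the two limits appear in the \emph{favorable} order: first $\tau\rightarrow0$, then $\epsilon\rightarrow0$. First I would fix an arbitrary $\epsilon\in\left(0,\frac{1}{2}\right)$ and invoke Theorem~\ref{CircleTheorem}, which under the standing hypothesis $\tilde{r}\neq1$ yields $\lim_{\tau\rightarrow0}F_{\tau}^{\epsilon}(\tilde{r},\omega)=0$. The key point is that this conclusion holds for \emph{every} admissible $\epsilon$, and always with the identical limiting value zero. I would therefore introduce the function $g(\epsilon)\equiv\lim_{\tau\rightarrow0}F_{\tau}^{\epsilon}(\tilde{r},\omega)$ and record that $g$ is identically zero on the entire interval $\left(0,\frac{1}{2}\right)$.

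The only remaining step is to evaluate the outer limit $\lim_{\epsilon\rightarrow0}g(\epsilon)$. Since $g(\epsilon)=0$ for all $\epsilon\in\left(0,\frac{1}{2}\right)$, the limit of this constant function as $\epsilon\rightarrow0$ is trivially zero, which is exactly the claimed iterated limit $\lim_{\epsilon\rightarrow0}\lim_{\tau\rightarrow0}F_{\tau}^{\epsilon}(\tilde{r},\omega)=0$.

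As for where the difficulty lies, I expect essentially none, precisely because the ordering of the limits coincides with the one the theorem already supplies. The single point I would flag is this ordering. Had the limits been nested the other way---outer $\tau\rightarrow0$ and inner $\epsilon\rightarrow0$---the argument would \emph{not} be immediate: Theorem~\ref{CircleTheorem} is established for each fixed $\epsilon$, and the underlying estimates (the $O(\sqrt{\tau})$ bound on $I^{(1)}(\tilde{r},\omega)$ and the $O(\tau)$ bound on $I^{(2)}(\tilde{r},\omega)$) carry constants that depend on $\epsilon$ through $l^{\epsilon}$ and the geometry of $\mathcal{D}^{\epsilon}$. Interchanging the limits in that reversed order would require verifying uniformity in $\epsilon$, which is a genuinely more delicate task. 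In the order stated, however, the inner limit collapses to zero for every $\epsilon$ at once, so the outer limit is vacuous and the corollary is immediate.
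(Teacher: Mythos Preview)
Your proposal is correct and matches the paper's own justification, which simply notes that since Theorem~\ref{CircleTheorem} holds for every $0<\epsilon<\frac{1}{2}$, it also holds as $\epsilon\rightarrow0$. Your additional remark about the ordering of limits is a helpful elaboration, but the core argument is identical.
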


\section{Spatial Frequencies as Gradient Histogram Bins}

We now show that the squared magnitude of the Fourier transform of
the CWR ($\phi$) when polled close to the unit circle ($\tilde{r}=1$)
is approximately equal to the density function of the distance transform
gradients ($P$) with the approximation becoming increasingly tight
as $\tau\rightarrow0$.

The squared magnitude of the Fourier transform---also called its \emph{power
spectrum} \cite{Bracewell99}---is given by 
\begin{equation}
P_{\tau}^{\epsilon}(\tilde{r},\omega)\equiv|F_{\tau}^{\epsilon}(\tilde{r},\omega)|^{2}=F_{\tau}^{\epsilon}(\tilde{r},\omega)\overline{F_{\tau}^{\epsilon}(\tilde{r},\omega)}.\label{def:Ph}
\end{equation}
 By definition, $P_{\tau}^{\epsilon}(\tilde{r},\omega)\geq0$. From
Lemma~\ref{IntegralLemma}, we have 
\begin{equation}
\int_{0}^{2\pi}\int_{0}^{\infty}P_{\tau}^{\epsilon}(\tilde{r},\omega)\tilde{r}d\tilde{r}d\omega=1
\end{equation}
 \emph{independent} of $\tau$. Hence, $P_{\tau}^{\epsilon}(\tilde{r},\omega)$
can be treated as a density function for all values of $\tau$. We
earlier observed that the gradient density function of the unit vector
distance transform gradients is one-dimensional and defined over the
space of orientations $\omega$. For $P_{\tau}^{\epsilon}(\tilde{r},\omega)$
to behave as an orientation density function, it needs to be integrated
along the radial direction $\tilde{r}$. Since Theorem~\ref{CircleTheorem}
states that the Fourier transform values are concentrated only on
the unit circle $\tilde{r}=1$ and converges to zero elsewhere as
$\tau\rightarrow0$, it should be sufficient if the integration for
$\tilde{r}$ is done over a region very close to $\tilde{r}=1$. The
following theorem---the principal result in this paper---confirms
our observation. 
\begin{theorem}
\label{IntegralLimitTheorem} For any given $0<\epsilon<\frac{1}{2}$,
$0<\delta<1$, $\omega_{0}\in[0,2\pi)$ and $0<\Delta<2\pi$, 
\begin{equation}
\lim_{\tau\rightarrow0}\int_{\omega_{0}}^{\omega_{0}+\Delta}\int_{1-\delta}^{1+\delta}P_{\tau}^{\epsilon}(\tilde{r},\omega)\tilde{r}d\tilde{r}d\omega=\int_{\omega_{0}}^{\omega_{0}+\Delta}P(\omega)d\omega.
\end{equation}

\end{theorem}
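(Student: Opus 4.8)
The plan is to prove that the nonnegative measure $P_\tau^\epsilon(\tilde r,\omega)\,\tilde r\,d\tilde r\,d\omega$ concentrates, as $\tau\to0$, onto the image of $\Omega^\epsilon$ under the gradient map $\nabla S$, and that its weak limit is the pushforward of the normalized area measure on $\Omega^\epsilon$. The single fact that drives everything is that for $X=Y_k+(r\cos\theta,r\sin\theta)\in(Y_k+\mathcal D_k^\epsilon)$ one has $S(X)=r$, hence $\nabla S(X)=(\cos\theta,\sin\theta)$: the gradient is a \emph{unit} vector whose orientation is exactly the polar angle $\theta$. Thus $\nabla S(X)$ lies in the annular sector $[1-\delta,1+\delta]\times[\omega_0,\omega_0+\Delta]$ precisely when $\theta\in[\omega_0,\omega_0+\Delta]$, the radial constraint being automatic because $0<\delta<1$. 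Using $\int_{R_k^{(1)}(\theta)}^{R_k^{(2)}(\theta)} r\,dr=\frac{(1-2\epsilon)R_k^2(\theta)}{2}$ and $L^\epsilon=(1-2\epsilon)L$, the pushforward mass of this sector equals $\frac{1}{L^\epsilon}\sum_k\int_{\omega_0}^{\omega_0+\Delta}\frac{(1-2\epsilon)R_k^2(\theta)}{2}\,d\theta=\frac{1}{L}\sum_k\int_{\omega_0}^{\omega_0+\Delta}\frac{R_k^2(\theta)}{2}\,d\theta=\int_{\omega_0}^{\omega_0+\Delta}P(\omega)\,d\omega$, which is the right-hand side. So the task reduces to identifying the weak limit.

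First I would write $P_\tau^\epsilon=|F_\tau^\epsilon|^2$ as a four-fold oscillatory integral over $\Omega^\epsilon\times\Omega^\epsilon$ with combined phase $\Phi=S(X)-S(X')-\tilde r\,e_\omega\cdot(X-X')$, where $e_\omega=(\cos\omega,\sin\omega)$, and integrate it against $\tilde r\,d\tilde r\,d\omega$ over the sector. Setting $\nabla\Phi=0$ in all six variables $(X,X',\tilde r,\omega)$ forces $\nabla S(X)=\nabla S(X')=\tilde r\,e_\omega$ together with $e_\omega\cdot(X-X')=0$ and $e_\omega^\perp\cdot(X-X')=0$ (here $e_\omega^\perp=(-\sin\omega,\cos\omega)$); since $\|\nabla S\|=1$ this collapses to the two-dimensional critical manifold $\{X=X',\ \tilde r=1,\ \omega=\theta(X)\}$. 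Off this manifold the phase is nonstationary, and I would reuse the divergence-theorem and integration-by-parts device from the proof of Theorem~\ref{CircleTheorem}---the polygonal, non-arc-like boundaries of $\mathcal D_k^\epsilon$ again guaranteeing that no level curve of the phase runs along an edge over a finite interval---to show these contributions are $O(\tau)$ and vanish.

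The surviving contribution is extracted by the higher-order stationary phase approximation in the four directions transverse to the critical manifold. Passing to base and difference variables $X$ and $W=X-X'$ and expanding $S(X)-S(X-W)=\nabla S(X)\cdot W+O(\|W\|^2)$, the prefactor $\frac{1}{(2\pi\tau l^\epsilon)^2}$ is precisely what turns the linear part into a nascent delta: $\frac{1}{(2\pi\tau)^2}\iint e^{\frac{i}{\tau}(\nabla S(X)-\tilde r e_\omega)\cdot W}\,dW\to\delta^{(2)}(\nabla S(X)-\tilde r e_\omega)$ with \emph{unit} coefficient, the quadratic Hessian terms producing only higher-order corrections. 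Integrating this delta against $\tilde r\,d\tilde r\,d\omega$ (which is the Cartesian area element in the $(u,v)$ frequency plane) over the sector yields the indicator $\mathbf 1\{\nabla S(X)\ \mbox{lies in the sector}\}$, and the remaining integral over the critical manifold returns $\frac{1}{L^\epsilon}\iint_{\{\theta(X)\in[\omega_0,\omega_0+\Delta]\}}dX$, that is, $\int_{\omega_0}^{\omega_0+\Delta}P(\omega)\,d\omega$ as computed above. The answer is independent of $\delta$ because the only critical value $\tilde r=1$ is interior to $[1-\delta,1+\delta]$. Here the excision defining $\Omega^\epsilon$ is essential: the angular second derivative of $S$ blows up like $1/r$ near each source and is discontinuous across Voronoi boundaries, so removing both is what makes the stationary-phase error bounds uniform.

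The step I expect to be the main obstacle is the rigorous interchange of $\lim_{\tau\to0}$ with the sector integration---the ``proper ordering of limits.'' By Theorem~\ref{CircleTheorem} the integrand $P_\tau^\epsilon(\tilde r,\omega)\to0$ pointwise for every $\tilde r\neq1$, yet the integral tends to a strictly positive limit; this is the signature of concentration, so one must keep $\tau\to0$ inside the fixed-$\delta$ integral and cannot pass to the pointwise limit. Making the delta heuristic rigorous means producing stationary-phase remainder estimates that are uniform over the compact sector and integrable in $X$ up to $\partial\Omega^\epsilon$, controlling the edge effect where the critical manifold meets the angular endpoints $\omega=\omega_0,\omega_0+\Delta$ (a measure-zero curve in $X$, hence harmless) and the lower-order boundary terms from $\partial\Omega^\epsilon$. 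I would package the clean radial estimate as a separate proposition and then assemble the sector integral and sum over the $K$ cones to finish.
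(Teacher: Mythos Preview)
Your proposal is correct in outline and takes a genuinely different route from the paper's proof.

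The paper explicitly attempts a direct five- (and six-) dimensional stationary phase and rejects it because the Hessian of the combined phase is singular at the critical point (Appendix~B): the rows corresponding to $r$ and $r'$ are proportional. Their workaround is \emph{symmetry breaking}: they freeze $(r',\theta')$, perform a nondegenerate 3D stationary phase in $(r,\theta,\tilde r)$ (Lemma~\ref{lemma:gjk}), then a 1D stationary phase in $\theta'$, and finally a 1D stationary phase in $\omega$ to kill the off-diagonal terms $j\neq k$ through the oscillating factor $\exp(-i\alpha_{jk}(\omega)/\tau)$ (Lemma~\ref{lemma:psijkequalsP}). Everything is reduced to repeated applications of the classical stationary-phase formula with isolated nondegenerate critical points.

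You instead recognise that the Hessian degeneracy is not a defect but the signature of a two-dimensional \emph{critical manifold} $\{X=X',\ \tilde r=1,\ \omega=\theta(X)\}$, and you propose stationary phase in the four transverse directions---equivalently, your nascent-delta / pushforward-of-measure argument. This absorbs the diagonal/off-diagonal dichotomy automatically (since $X=X'$ forces $j=k$) and explains cleanly why the answer is $\delta$-independent. The trade-off is that you must either cite or prove a stationary-phase lemma for nondegenerate critical manifolds (or, in your formulation, justify the delta limit on a \emph{bounded} $W$-domain with $O(\|W\|^2)$ phase corrections and the amplitude factor $r'\tilde r$, uniformly in $X$ over $\Omega^\epsilon$); the paper's sequential argument, while longer, needs nothing beyond textbook one- and three-dimensional stationary phase with simple critical points. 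Your identification of the uniform remainder control near $\partial\Omega^\epsilon$ and at the angular endpoints as the main technical point matches exactly where the paper expends most of its effort.
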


\subsection{An Intuitive Examination of Theorem~\ref{IntegralLimitTheorem}}

Before we proceed with the formal proof, we again try and give an
intuitive explanation of why the theorem statement is true. The Fourier
transform of the CWR defined in Equation~\ref{eq:Fuvhbar} involves
two spatial integrals (over $x$ and $y$) which are converted into
polar coordinate integrals. The squared magnitude of the Fourier
transform (power spectrum), $P_{\tau}^{\epsilon}(\tilde{r},\omega)$,
involves multiplying the Fourier transform with its complex conjugate.
The complex conjugate is yet another $2D$ integral which we will
perform in polar coordinates. As the gradient density function
is one-dimensional and defined over the space of orientations, we
integrate the power spectrum along the radial direction close to the
unit circle $\tilde{r}=1$ (as $\delta\rightarrow0$). This is a fifth
integral. When we poll the power spectrum $P_{\tau}^{\epsilon}(\tilde{r},\omega)$
close to $\tilde{r}=1$, the two sinusoids, namely, $\exp\left(\frac{iS(x,y)}{\tau}\right)$
and $\exp\left(\frac{-i(ux+vy)}{\tau}\right)$ in Equation~\ref{eq:Fuvhbar}
are in resonance only when there is a \emph{perfect match} between
the orientation of each ray of the distance transform $S(x,y)$ and
the angle of the 2D spatial frequency ($\omega=\arctan\left(\frac{v}{u}\right)$).
All the grid locations $(x,y)$ having the same gradient orientation
\begin{equation}
\arctan\left(\frac{S_{y}}{S_{x}}\right)=\arctan\left(\frac{v}{u}\right)
\end{equation}
 \emph{cast} a vote only at their corresponding spatial frequency \textquotedbl{}histogram\textquotedbl{}
bin $\omega$. Since the histogram bin is generally populated by votes
from multiple grid locations, this leads to cross phase factors. Integrating
the power spectrum over a small range on the orientation (constituting
the sixth integral) helps in canceling out these phase factors giving
us the desired result when we take the limit as $\tau\rightarrow0$.
This integral and limit cannot be exchanged because the phase factors
will not otherwise cancel. The proof mainly deals with managing these
six integrals.

\subsection{Proof of Theorem~\ref{IntegralLimitTheorem}}

We now provide the formal proof of Theorem~\ref{IntegralLimitTheorem}.
For the sake of readability, we divide the proof into smaller subsections.
To achieve a good flow, we state major portions of our proof as lemmas
whose proofs are given in the appendix. We would like to emphasize
that these lemmas are meaningful only within the context of the 
proof and do not have much significance as stand-alone statements.
Important symbols used in the proof are adumbrated in Table~\ref{table:impsymbols}.

\begin{center}
\begin{table}[ht!]
\caption{Table of important symbols\label{table:impsymbols}}

\noindent \centering{}%
\begin{tabular}{|l|l|}
\hline 
Symbol  & Comments \tabularnewline
\hline\hline
$I$  & Integral of $P_{\tau}^{\epsilon}$ over the radial length $[1-\delta,1+\delta]$. \tabularnewline
\hline 
$g_{jk}$  & Integral over the variables $r$, $\theta$ and $\tilde{r}$ after
symmetry breaking. \tabularnewline
\hline 
$\gamma_{jk}$  & Phase term in the integral for $g_{jk}$. \tabularnewline
\hline 
$I_{jk}^{(1)},I_{jk}^{(2)}$  & Integrals for the main and the error terms of $g_{jk}$ respectively. \tabularnewline
 & $I$ is the sum of $I_{jk}^{(1)}$ and $I_{jk}^{(2)}$. \tabularnewline
\hline 
$p,q$  & Functions used in the definition of $I_{jk}^{(1)}$.\tabularnewline
 & $p$ represents the phase term.\tabularnewline
\hline 
$J_{jk}^{(1)},J_{jk}^{(2)},J_{jk}^{(3)}$  & Integrals obtained when $I_{jk}^{(1)}$ is split over the integral
range for $\theta^{\prime}$.\tabularnewline
\hline 
$\beta$  & Symbol used to divide the integral range for $\theta^{\prime}$ into
three integrals.\tabularnewline
 & The limit as $\beta\rightarrow0$ is considered in the proof.\tabularnewline
\hline 
$G_{jk}$  & Result of integrating over $\theta^{\prime}$ while evaluating $J_{jk}^{(1)}$.\tabularnewline
\hline 
$\psi_{jk},\chi$  & Integrals for the main and the error terms of $G_{jk}$ respectively. \tabularnewline
 & $J_{jk}^{(1)}$ is the sum of $\psi_{jk}$ and $\chi$.\tabularnewline
\hline 
$\epsilon_{3},\epsilon_{4}$  & Error terms used to define $\chi$.\tabularnewline
\hline 
\end{tabular}
\end{table}

\par\end{center}

\noindent First, observe that 
\begin{equation}
\overline{F_{\tau}^{\epsilon}(\tilde{r},\omega)}\equiv\sum_{k=1}^{K}\frac{\overline{C_{k}}}{2\pi\tau l^{\epsilon}}\int_{0}^{2\pi}\int_{R_{k}^{(1)}(\theta^{\prime})}^{R_{k}^{(2)}(\theta^{\prime})}\exp\left(-\frac{ir^{\prime}}{\tau}\left[1-\tilde{r}\cos(\theta^{\prime}-\omega)\right]\right)r^{\prime}dr^{\prime}d\theta^{\prime}.
\end{equation}
 Define 
\begin{equation}
I(\omega)\equiv\int_{1-\delta}^{1+\delta}P_{\tau}^{\epsilon}(\tilde{r},\omega)\tilde{r}d\tilde{r}=\int_{1-\delta}^{1+\delta}F_{\tau}^{\epsilon}(\tilde{r},\omega)\overline{F_{\tau}^{\epsilon}(\tilde{r},\omega)}\tilde{r}d\tilde{r}.\label{def:Idef1}
\end{equation}
 As $\tau\rightarrow0$, we show that $I(\omega)$ approaches the
density function of the gradients of $S(x,y)$. Note that the integral
in Equation~\ref{def:Idef1} is over the interval $[1-\delta,1+\delta]$,
where $\delta>0$ can be made arbitrarily small (as $\tau\rightarrow0$)
and this is due to Theorem~\ref{CircleTheorem}.

Recall that in order to evaluate $I(\omega)$, we need to perform five
integrals, four to obtain the power spectrum $P_{\tau}^{\epsilon}(\tilde{r},\omega)$
and a fifth along the radial direction $\tilde{r}$ over $[1-\delta,1+\delta]$
which is close to the unit circle $\tilde{r}=1$. An easy way to
compute $I(\omega)$ in the limit $\tau\rightarrow0$ would be to
apply a 5D stationary phase approximation \cite{Wong89}. Unfortunately,
the 5D stationary phase approximation \emph{cannot} be directly employed
in our case for reasons detailed in Appendix~\ref{sec:pitfalls5Dstationaryphase}.

\subsection*{Breaking the symmetry of the integral}
As described in Section~\ref{sec:symmetrybreaking}, we propose to solve for $I(\omega)$ in Equation~\ref{def:Idef1}
by breaking the symmetry of the integral. We fix the conjugate variables
$r^{\prime}$ and $\theta^{\prime}$ and perform the integration only
with respect to the other three variables namely $r$, $\theta$ and
$\tilde{r}$. To this end, let 
\begin{equation}
I(\omega)=\sum_{j=1}^{K}\sum_{k=1}^{K}\frac{1}{(2\pi\tau l^{\epsilon})^{2}}\int_{0}^{2\pi}\int_{R_{k}^{(1)}(\theta^{\prime})}^{R_{k}^{(2)}(\theta^{\prime})}\exp\left(\frac{-ir^{\prime}}{\tau}\right)g_{jk}(r^{\prime},\theta^{\prime};\omega)r^{\prime}dr^{\prime}d\theta^{\prime},\label{eq:Idef2}
\end{equation}
 where 
\begin{eqnarray}
g_{jk}(r^{\prime},\theta^{\prime};\omega) & = & \int_{1-\delta}^{1+\delta}\int_{0}^{2\pi}\int_{R_{j}^{(1)}(\theta)}^{R_{j}^{(2)}(\theta)}\exp\left\{ \frac{i}{\tau}\gamma_{jk}(r,\theta,\tilde{r};r^{\prime},\theta^{\prime},\omega)\right\} f_{2}(r,\tilde{r})drd\theta d\tilde{r}.\label{def:g}
\end{eqnarray}
Here,
\begin{eqnarray}
\gamma_{jk}(r,\theta,\tilde{r};r^{\prime},\theta^{\prime},\omega) & = & r\left[1-\tilde{r}\cos(\theta-\omega)\right]+r^{\prime}\tilde{r}\cos(\theta^{\prime}-\omega)\nonumber \\
 &  & -\tilde{r}\left[\cos(\omega)(x_{j}-x_{k})+\sin(\omega)(y_{j}-y_{k})\right]\label{def:gamma}
\end{eqnarray}
 and 
\begin{equation}
f_{2}(r,\tilde{r})=r\tilde{r}.
\end{equation}
In the definition of $\gamma_{jk}(r,\theta,\tilde{r};r^{\prime},\theta^{\prime},\omega)$
in Equation~\ref{def:gamma}, $\omega,r^{\prime}$ and $\theta^{\prime}$
are held fixed. Similarly, in the definition of $g_{jk}(r^{\prime},\theta^{\prime};\omega)$
in Equation~\ref{def:g}, $\omega$ is a constant. The phase term
of the quantity $C_{j}\overline{C_{k}}$ (Equation~\ref{eq:phasetermCjCk})
is absorbed in $\gamma_{jk}$ and pursuant to Fubini's theorem \cite{Fubini58},
the integration with respect to $\tilde{r}$ can be considered before
the integration over $r^{\prime}$ and $\theta^{\prime}$. Define 
\begin{equation}
r_{jk}(r^{\prime},\theta^{\prime};\omega)\equiv r^{\prime}\cos(\theta^{\prime}-\omega)-[\cos(\omega)(x_{j}-x_{k})+\sin(\omega)(y_{j}-y_{k})].
\end{equation}
This leads to the following lemma.
\begin{lemma}
\label{lemma:gjk} If $r_{jk}(r^{\prime},\theta^{\prime};\omega)>0$,
then as $\tau\rightarrow0$, 
\begin{eqnarray}
g_{jk}(r^{\prime},\theta^{\prime};\omega) & = & (2\pi\tau)^{\frac{3}{2}}\sqrt{r_{jk}(r^{\prime},\theta^{\prime};\omega)}\exp\left(\frac{ir_{jk}(r^{\prime},\theta^{\prime};\omega)}{\tau}+\frac{i\pi}{4}\right)\nonumber \\
 &  & +\tau^{\kappa}\xi_{jk}(r^{\prime},\theta^{\prime};\omega)\label{eq:gjk1}
\end{eqnarray}
where $\kappa\geq2$ and $\xi_{jk}(r^{\prime},\theta^{\prime};\omega)$
is some bounded continuous function which includes the contributions
from the boundary. If $r_{jk}(r^{\prime},\theta^{\prime};\omega)\leq0$,
then as $\tau\rightarrow0$, $g_{jk}(r^{\prime},\theta^{\prime};\omega)=0$.
\\

\end{lemma}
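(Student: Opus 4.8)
The plan is to read $g_{jk}$ in Equation~\ref{def:g} as a three--dimensional oscillatory integral in $(r,\theta,\tilde r)$ with smooth amplitude $f_2(r,\tilde r)=r\tilde r$ (bounded away from zero, since $r\ge R_j^{(1)}(\theta)=\epsilon R_j(\theta)>0$ and $\tilde r\ge 1-\delta>0$) and phase $\gamma_{jk}$, and to extract its $\tau\to0$ behaviour by the higher--order stationary phase approximation \cite{Wong89}. First I would simplify the phase: substituting the definition of $r_{jk}(r^{\prime},\theta^{\prime};\omega)$ into Equation~\ref{def:gamma} collapses the $\theta^{\prime}$-- and displacement--dependent terms into a single multiple of $\tilde r$, giving
\[
\gamma_{jk}=r\left[1-\tilde r\cos(\theta-\omega)\right]+\tilde r\,r_{jk}(r^{\prime},\theta^{\prime};\omega),
\]
in which $r_{jk}$ now appears merely as a fixed parameter. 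Setting $\nabla\gamma_{jk}=0$ and using that $r$ and $\tilde r$ are strictly positive, the $\theta$--equation forces $\sin(\theta-\omega)=0$, the $r$--equation then selects $\theta=\omega,\ \tilde r=1$, and the $\tilde r$--equation finally forces $r=r_{jk}$; so the only candidate stationary point in the closed domain is $(r,\theta,\tilde r)=(r_{jk},\omega,1)$.

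The second step is the case split. When $r_{jk}>0$ this candidate is an admissible, interior, positive--radius stationary point, and I would confirm nondegeneracy by computing the Hessian of $\gamma_{jk}$ there, which I expect to be
\[
H=\left(\begin{array}{ccc}0&0&-1\\0&r_{jk}&0\\-1&0&0\end{array}\right),\qquad \det H=-r_{jk},
\]
with eigenvalues $\{r_{jk},1,-1\}$, hence $|\det H|^{1/2}=\sqrt{r_{jk}}$ and signature $\sigma=+1$. Inserting the amplitude value $f_2(r_{jk},1)=r_{jk}$, the critical phase value $\gamma_{jk}(r_{jk},\omega,1)=r_{jk}$, and these data into the three--dimensional stationary phase formula $f_2(2\pi\tau)^{3/2}|\det H|^{-1/2}\exp\left(\frac{i}{\tau}\gamma_{jk}+\frac{i\pi}{4}\sigma\right)$ reproduces exactly the claimed leading term $(2\pi\tau)^{3/2}\sqrt{r_{jk}}\exp\left(\frac{ir_{jk}}{\tau}+\frac{i\pi}{4}\right)$. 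When $r_{jk}\le0$ the candidate point has $r\le0$ and thus lies outside the integration domain, so $\nabla\gamma_{jk}$ is bounded away from zero on the whole compact region; the same integration--by--parts identity used in Theorem~\ref{CircleTheorem}, now with no stationary point to excise, yields $g_{jk}=O(\tau)$ and hence $g_{jk}\to0$, which is the second assertion.

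The delicate step, and the one I expect to be the main obstacle, is controlling everything that is not the interior stationary contribution and showing it is $O(\tau^{\kappa})$ with $\kappa\ge2$. I would localise with a smooth cutoff around $(r_{jk},\omega,1)$: on the non--stationary interior the phase gradient is nonzero, so repeated integration by parts makes that remainder $O(\tau^{N})$ for every $N$, while the interior correction to the leading term is $O(\tau^{5/2})$. The dominant error is therefore the boundary contribution. The boundary of the box splits into the flat faces $\tilde r=1\pm\delta$ and the radial faces $r=R_j^{(1,2)}(\theta)$. On the flat faces the restricted phase has no critical point, because $\sin(\theta-\omega)=0$ forces $\cos(\theta-\omega)=\pm1\neq(1\pm\delta)^{-1}$; these faces contribute only through their edges, at $O(\tau^{5/2})$ or smaller. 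On a radial face, a \emph{nondegenerate} critical point of the restricted phase is a stationary point of the second kind (the gradient is transversal because the interior stationary point is strictly interior), and such a point contributes a two--dimensional tangential Fresnel factor $O(\tau)$ times a normal endpoint factor $O(\tau)$, i.e.\ $O(\tau^{2})$. The one thing that could inflate this estimate is a \emph{flat} (first--kind) boundary stationarity, where a level curve of $\gamma_{jk}$ coincides with a boundary segment over a nondegenerate $\theta$--interval; but this is precisely the configuration excluded in the proof of Theorem~\ref{CircleTheorem} by the hypothesis that $\partial\mathcal{D}_j^{\epsilon}$ consists of straight line segments. Collecting these pieces gives the error $\tau^{\kappa}\xi_{jk}$ with $\kappa\ge2$ and $\xi_{jk}$ a bounded, continuous function carrying the boundary data, completing the plan. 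I anticipate that verifying the second--kind boundary points really do sit at order $\tau^{2}$ (rather than the interior order $\tau^{3/2}$), and that the straight--segment geometry rules out all first--kind degeneracies, will absorb most of the work.
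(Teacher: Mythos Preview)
Your proposal is correct and follows essentially the same route as the paper: identify $g_{jk}$ as a 3D oscillatory integral, locate the unique interior stationary point $(r,\theta,\tilde r)=(r_{jk},\omega,1)$, compute the Hessian (determinant $-r_{jk}$, signature $+1$), read off the leading term from the higher--order stationary phase formula, and then argue that the boundary contributions are $O(\tau^{\kappa})$ with $\kappa\ge2$ by invoking the straight--line--segment hypothesis from Theorem~\ref{CircleTheorem} to rule out any patch on which the restricted phase is constant. Your boundary decomposition is in fact slightly more explicit than the paper's---you treat the flat faces $\tilde r=1\pm\delta$ separately and verify they carry no restricted critical point, whereas the paper focuses on the two radial surfaces $\mathcal{A}_{1,2}\times[1-\delta,1+\delta]$ and shows directly that $\gamma_{jk}$ cannot be constant on any 2D patch there---but the substance is the same.
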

The proof of Lemma~\ref{lemma:gjk}---obtained using a three
dimensional stationary phase approximation---is available in Appendix~\ref{sec:prooflemmagjk}.
Note that for $j=k$ and $\theta^{\prime}$ close to $\omega$, $r_{jk}(r^{\prime},\theta^{\prime};\omega)>0$
and hence $g_{jk}(r^{\prime},\theta^{\prime};\omega)\not=0$. Below,
we show that the only pertinent scenarios that need consideration
are $\theta^{\prime}$ close to $\omega$ and $j=k$. When $\theta^{\prime}$
is away from $\omega$ or $j\not=k$, the integral $g_{jk}(r^{\prime},\theta^{\prime};\omega)$
vanishes. Hence, for the sake of readability of our proof, we let $g_{jk}(r^{\prime},\theta^{\prime};\omega)$
take the most general form given in Equation~\ref{eq:gjk1} for all
values of $r^{\prime}$ and $\theta^{\prime}$.

\subsection*{Determining $I(\omega)$}

Substituting the value of $g_{jk}(r^{\prime},\theta^{\prime};\omega)$
into Equation~\ref{eq:Idef2}, as $\tau\rightarrow0$, we get 
\begin{equation}
I(\omega)=\sum_{j=1}^{K}\sum_{k=1}^{K}\left\{ \frac{\eta_{jk}(\omega)}{L^{\epsilon}}I_{jk}^{(1)}(\omega)+I_{jk}^{(2)}(\omega)\right\} \label{eq:Idef3}
\end{equation}
 where 
\begin{eqnarray}
I_{jk}^{(1)}(\omega) & = & \frac{1}{\sqrt{2\pi\tau}}\int_{0}^{2\pi}\int_{R_{k}^{(1)}(\theta^{\prime})}^{R_{k}^{(2)}(\theta^{\prime})}\exp\left(\frac{ip(r^{\prime},\theta^{\prime};\omega)}{\tau}\right)q(r^{\prime},\theta^{\prime};\omega)dr^{\prime}d\theta^{\prime},\nonumber \\
I_{jk}^{(2)}(\omega) & = & \int_{0}^{2\pi}\int_{R_{k}^{(1)}(\theta^{\prime})}^{R_{k}^{(2)}(\theta^{\prime})}\exp\left(\frac{-ir^{\prime}}{\tau}\right)r^{\prime}\frac{1}{(2\pi l^{\epsilon})^{2}}\tau^{\kappa-2}\xi_{jk}(r^{\prime},\theta^{\prime};\omega)dr^{\prime}d\theta^{\prime},\nonumber \\
\eta_{jk}(\omega) & = & \exp\left(\frac{-i\alpha_{jk}(\omega)}{\tau}+\frac{i\pi}{4}\right),\nonumber \\
\alpha_{jk}(\omega) & = & \cos(\omega)(x_{j}-x_{k})+\sin(\omega)(y_{j}-y_{k}),\nonumber \\
p(r^{\prime},\theta^{\prime};\omega) & = & -r^{\prime}[1-\cos(\theta^{\prime}-\omega)]\mbox{ and}\nonumber \\
q(r^{\prime},\theta^{\prime};\omega) & = & r^{\prime}\sqrt{r^{\prime}\cos(\theta^{\prime}-\omega)-\alpha_{jk}(\omega)}.\label{eqn:symdefns}
\end{eqnarray}
 In the definition of the functions $p(r^{\prime},\theta^{\prime};\omega)$
and $q(r^{\prime},\theta^{\prime};\omega)$, $\omega$ is held fixed.
Since $\kappa\geq2$, by the Riemann-Lebesgue lemma, we have $\lim_{\tau\rightarrow0}I_{jk}^{(2)}=0$
and from the Lebesgue dominated convergence theorem, it follows that
\begin{equation}
\lim_{\tau\rightarrow0}\int_{\omega_{0}}^{\omega_{0}+\Delta}I_{jk}^{(2)}(\omega)=\int_{0}^{2\pi}\lim_{\tau\rightarrow0}I_{jk}^{(2)}(\omega)=0.
\end{equation}
 Using the above result in Equation~\ref{eq:Idef3}, we get 
\begin{equation}
\lim_{\tau\rightarrow0}\int_{\omega_{0}}^{\omega_{0}+\Delta}I(\omega)d\omega=\sum_{j=1}^{K}\sum_{k=1}^{K}\lim_{\tau\rightarrow0}\int_{\omega_{0}}^{\omega_{0}+\Delta}\frac{\eta_{jk}(\omega)}{L^{\epsilon}}I_{jk}^{(1)}(\omega)d\omega.\label{eq:Idef4}
\end{equation}

\subsection*{Splitting the integral over $\theta^{\prime}$ into three disconnected
regions}

Consider the integral $I_{jk}^{(1)}(\omega)$. As essential contributions
to it come only from the stationary points of $p(r^{\prime},\theta^{\prime};\omega)$
\cite{Jones58,Wong81,Cooke82} (with $\omega$ held fixed), we first
determine its critical (stationary) point(s). The partial derivatives of $p(r^{\prime},\theta^{\prime};\omega)$
at a fixed value of $\omega$ are given by 
\begin{equation}
\frac{\partial p}{\partial r^{\prime}}=-1+\cos(\theta^{\prime}-\omega),\hspace{10pt}\frac{\partial p}{\partial\theta^{\prime}}=-r^{\prime}\sin(\theta^{\prime}-\omega).
\end{equation}
 For $\nabla p=0$, we must have $\theta^{\prime}=\omega$.
Hence, in order to evaluate $I_{jk}^{(1)}(\omega)$, we find it useful
to divide the integral range $[0,2\pi)$ for $\theta^{\prime}$ into
three disjoint regions namely $[0,\omega-\beta)$, $[\omega-\beta,\omega+\beta]$
and $(\omega+\beta,2\pi)$ for a fixed $\beta>0$, and write 
\begin{equation}
I_{jk}^{(1)}(\omega)=J_{jk}^{(1)}(\beta,\omega)+J_{jk}^{(2)}(\beta,\omega)+J_{jk}^{(3)}(\beta,\omega)\label{eq:I1jk}
\end{equation}
 where 
\begin{eqnarray}
J_{jk}^{(1)}(\beta,\omega) & = & \frac{1}{\sqrt{2\pi\tau}}\int_{\omega-\beta}^{\omega+\beta}\int_{R_{k}^{(1)}(\theta^{\prime})}^{R_{k}^{(2)}(\theta^{\prime})}\exp\left(\frac{ip(r^{\prime},\theta^{\prime};\omega)}{\tau}\right)q(r^{\prime},\theta^{\prime};\omega)dr^{\prime}d\theta^{\prime},\nonumber \\
J_{jk}^{(2)}(\beta,\omega) & = & \frac{1}{\sqrt{2\pi\tau}}\int_{0}^{\omega-\beta}\int_{R_{k}^{(1)}(\theta^{\prime})}^{R_{k}^{(2)}(\theta^{\prime})}\exp\left(\frac{ip(r^{\prime},\theta^{\prime};\omega)}{\tau}\right)q(r^{\prime},\theta^{\prime};\omega)dr^{\prime}d\theta^{\prime},\,\mathrm{and}\nonumber \\
J_{jk}^{(3)}(\beta,\omega) & = & \frac{1}{\sqrt{2\pi\tau}}\int_{\omega+\beta}^{2\pi}\int_{R_{k}^{(1)}(\theta^{\prime})}^{R_{k}^{(2)}(\theta^{\prime})}\exp\left(\frac{ip(r^{\prime},\theta^{\prime};\omega)}{\tau}\right)q(r^{\prime},\theta^{\prime};\omega)dr^{\prime}d\theta^{\prime}.
\end{eqnarray}
Since the above relation is true for \emph{any} $\beta>0$, we can
let $\beta\rightarrow0$ (after we take the limit $\tau\rightarrow0$).
Fix a $\beta$ close to zero and consider the above integrals as $\tau\rightarrow0$.
Then we obtain: 
\begin{lemma}
\label{lemma:Ijk1equalsJjk1} 
\begin{equation}
\lim_{\tau\rightarrow0}\int_{\omega_{0}}^{\omega_{0}+\Delta}\frac{\eta_{jk}(\omega)}{L^{\epsilon}}I_{jk}^{(1)}(\omega)d\omega=\lim_{\beta\rightarrow0}\lim_{\tau\rightarrow0}\int_{\omega_{0}}^{\omega_{0}+\Delta}\frac{\eta_{jk}(\omega)}{L^{\epsilon}}J_{jk}^{(1)}(\beta,\omega)d\omega.
\end{equation}
 
\end{lemma}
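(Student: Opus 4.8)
The plan is to lean on the exact splitting in Equation~\eqref{eq:I1jk}, namely $I_{jk}^{(1)}(\omega)=J_{jk}^{(1)}(\beta,\omega)+J_{jk}^{(2)}(\beta,\omega)+J_{jk}^{(3)}(\beta,\omega)$, which holds for \emph{every} fixed $\beta>0$, and to show that the two lateral pieces $J_{jk}^{(2)}$ and $J_{jk}^{(3)}$---whose ranges of $\theta^{\prime}$ are bounded away from the stationary ray $\theta^{\prime}=\omega$---contribute nothing in the limit. Precisely, for each fixed $\beta>0$ I would establish
\[
\lim_{\tau\to0}\int_{\omega_{0}}^{\omega_{0}+\Delta}\frac{\eta_{jk}(\omega)}{L^{\epsilon}}J_{jk}^{(m)}(\beta,\omega)\,d\omega=0,\qquad m=2,3.
\]
Granting this, the splitting gives, for every fixed $\beta>0$,
\[
\int_{\omega_{0}}^{\omega_{0}+\Delta}\frac{\eta_{jk}(\omega)}{L^{\epsilon}}\big(I_{jk}^{(1)}(\omega)-J_{jk}^{(1)}(\beta,\omega)\big)\,d\omega=\sum_{m=2}^{3}\int_{\omega_{0}}^{\omega_{0}+\Delta}\frac{\eta_{jk}(\omega)}{L^{\epsilon}}J_{jk}^{(m)}(\beta,\omega)\,d\omega,
\]
whose right-hand side tends to $0$ as $\tau\to0$. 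Hence the $\tau\to0$ limits of the two integrals on the left coincide. Since the $I_{jk}^{(1)}$-integral is independent of $\beta$, the $J_{jk}^{(1)}$-integral has the same $\tau$-limit for every $\beta>0$, so the outer $\lim_{\beta\to0}$ is vacuous and returns exactly the asserted identity.

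The crux is the vanishing of the lateral pieces, and here I would reuse, almost verbatim, the non-stationary-phase machinery already deployed in the proof of Theorem~\ref{CircleTheorem}. The phase $p(r^{\prime},\theta^{\prime};\omega)=-r^{\prime}[1-\cos(\theta^{\prime}-\omega)]$ satisfies $\nabla p=0$ only on the ray $\theta^{\prime}=\omega$, which is excised from the domains of $J_{jk}^{(2)}$ and $J_{jk}^{(3)}$; there $|\partial p/\partial r^{\prime}|=1-\cos(\theta^{\prime}-\omega)\ge 1-\cos\beta>0$, so $\|\nabla p\|$ is bounded below by a positive constant \emph{independent of} $\omega$. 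I would then introduce the field $\mathbf{u}=\frac{\nabla p}{\|\nabla p\|^{2}}\,q$ with $\nabla=(\partial_{r^{\prime}},\partial_{\theta^{\prime}})$, so that $\mathbf{u}\cdot\nabla p=q$ and
\[
e^{ip/\tau}q=\frac{\tau}{i}\Big[\nabla\cdot\big(\mathbf{u}\,e^{ip/\tau}\big)-(\nabla\cdot\mathbf{u})\,e^{ip/\tau}\Big],
\]
and apply the divergence theorem to the inner double integral $\int\!\int e^{ip/\tau}q\,dr^{\prime}d\theta^{\prime}$. The resulting volume term carries an extra factor of $\tau$, while the boundary term is $\tau$ times a one-dimensional oscillatory integral that is $O(\sqrt{\tau})$: because $\partial\mathcal{D}_{k}^{\epsilon}$ is piecewise linear, the level curves of $p$ meet it only in isolated tangencies (stationary points of the second kind), exactly the mechanism established in the proof of Theorem~\ref{CircleTheorem}. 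Hence the inner integral is $O(\tau^{3/2})$, and after the explicit $1/\sqrt{2\pi\tau}$ prefactor one obtains $J_{jk}^{(2)},J_{jk}^{(3)}=O(\tau)$.

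All the constants in this estimate---the lower bound $1-\cos\beta$ on $\|\nabla p\|$, the bounds on $q$ and $\nabla\cdot\mathbf{u}$, and the perimeter of the integration region---are uniform over the compact window $\omega\in[\omega_{0},\omega_{0}+\Delta]$, so $|J_{jk}^{(m)}(\beta,\omega)|=O(\tau)$ \emph{uniformly} in $\omega$. Since $|\eta_{jk}(\omega)|=1$ (only its modulus matters for this bound, its rapid $\tau$-oscillation being irrelevant here), the $\omega$-integral of the lateral pieces is dominated by $\frac{\Delta}{L^{\epsilon}}\sup_{\omega}|J_{jk}^{(m)}(\beta,\omega)|=O(\tau)$, which vanishes as $\tau\to0$. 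This is precisely the claim displayed in the first paragraph, completing the argument.

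I expect the genuine difficulty to lie in the boundary bookkeeping of the divergence-theorem step rather than in the interior decay. Two technical points need attention. First, the splitting creates artificial radial cuts at $\theta^{\prime}=\omega\pm\beta$; along these the phase equals $-r^{\prime}(1-\cos\beta)$, strictly monotone in $r^{\prime}$, so the corresponding edge integrals are $O(\tau)$ and never require the second-kind analysis. Second, the amplitude $q(r^{\prime},\theta^{\prime};\omega)=r^{\prime}\sqrt{r^{\prime}\cos(\theta^{\prime}-\omega)-\alpha_{jk}(\omega)}$ is merely continuous---not differentiable---across the curve $r_{jk}=0$, where it vanishes. Consistently with Lemma~\ref{lemma:gjk} I would set $q\equiv0$ on $\{r_{jk}\le0\}$; since $q$ then vanishes on that portion of the effective integration boundary and is continuous across it, the loss of smoothness produces only a lower-order contribution and does not disturb the $O(\tau)$ estimate. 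With these two caveats handled, the argument is a localized rerun of the non-stationary estimate from Theorem~\ref{CircleTheorem}, now confined to the $\beta$-separated lateral regions.
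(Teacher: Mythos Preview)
Your approach mirrors the paper's own proof (Appendix~\ref{sec:Ijk1equalsJjk1}): exploit the splitting \eqref{eq:I1jk}, show the lateral pieces $J_{jk}^{(2)},J_{jk}^{(3)}$ vanish by rerunning the divergence-theorem\,/\,non-stationary-phase argument of Theorem~\ref{CircleTheorem} with $\mathbf{u}=\frac{\nabla p}{\|\nabla p\|^{2}}q$, and then bound the $\omega$-integral using $|\eta_{jk}|=1$. The only discrepancy is cosmetic---the paper records $J_{jk}^{(m)}=\tau^{\kappa_m}\zeta^{(m)}(\beta,\omega)$ with $\kappa_m\ge\tfrac12$ rather than your $O(\tau)$ (your ``inner integral is $O(\tau^{3/2})$'' step overlooks that the volume term is a priori only $O(\tau)$ unless one iterates once more as in the $I^{(2)}$ step of Theorem~\ref{CircleTheorem}), but either rate suffices; your extra bookkeeping on the artificial cuts $\theta'=\omega\pm\beta$ and on the square-root amplitude is in fact more careful than the paper itself.
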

\noindent The proof is available in Appendix~\ref{sec:Ijk1equalsJjk1}.

\subsection*{Interchanging the order of integration between $r^{\prime}$ and
$\theta^{\prime}$}

We now evaluate $J_{jk}^{(1)}(\beta,\omega)$ by interchanging the
order of integration between $r^{\prime}$ and $\theta^{\prime}$
which requires us to rewrite $\theta^{\prime}$ as a function of $r^{\prime}$.
Recall that the boundaries of each $\mathcal{D}_{k}^{\epsilon}$ along
$r(\theta^{\prime})=R_{k}^{(1)}(\theta^{\prime})$ and $r(\theta^{\prime})=R_{k}^{(2)}(\theta^{\prime})$
respectively are composed of a finite sequence of straight line segments.
In order to evaluate $J_{jk}^{(1)}(\beta,\omega)$, we need to consider
these boundaries only within the precincts of the angles $[\omega-\beta,\omega+\beta]$
at each $\mathcal{D}_{k}^{\epsilon}$. But for sufficiently small
$\beta$, we observe that for \emph{every} $\omega\in[0,2\pi)$, when
we consider these boundaries (along $R_{k}^{(1)}(\theta^{\prime})$
and $R_{k}^{(2)}(\theta^{\prime})$ respectively) within the angles
$[\omega-\beta,\omega+\beta]$, they are composed of \emph{at most}
two line segments as portrayed in Figure~\ref{fig:twoangle}.

\begin{figure}[ht!]
\begin{centering}
\includegraphics[width=0.4\textwidth]{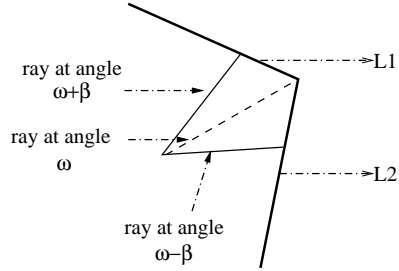} 
\par\end{centering}

\caption{Boundary considered within the angles $[\omega-\beta,\omega+\beta]$
is comprised of at most two line segments $L_{1}$ and $L_{2}$.}

\label{fig:twoangle} 
\end{figure}

Over each line segment, $r^{\prime}(\theta^{\prime})$ is either strictly
monotonic (strictly increases or strictly decreases) or has exactly
one critical point (strictly decreases, attains a minimum and then
strictly increases) as described in Figure~\ref{fig:rvstheta}.

\begin{figure}[ht!]
\begin{centering}
\includegraphics[width=0.8\textwidth]{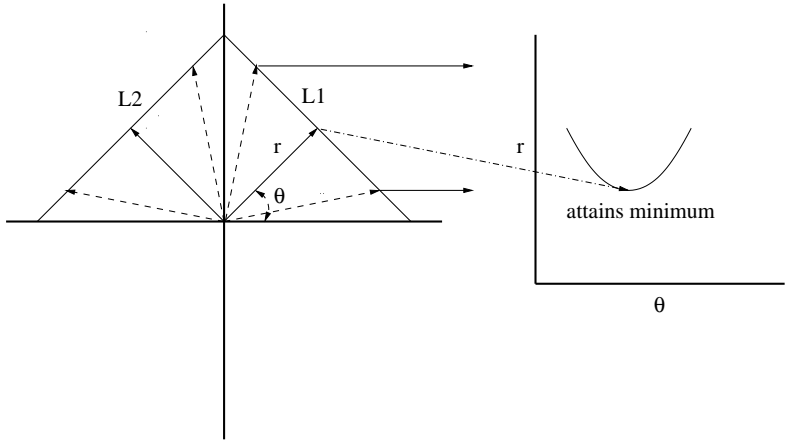} 
\par\end{centering}

\caption{Plot of radial length ($r$) vs angle ($\theta$).}

\label{fig:rvstheta} 
\end{figure}

Hence, it follows that for sufficiently small $\beta$, $\theta^{\prime}$
rewritten as a function of $r^{\prime}$ is composed of at most three
disconnected regions (as seen in Figure~\ref{fig:disconRegions}).

\begin{figure}[ht!]
\begin{centering}
\includegraphics[width=0.4\textwidth]{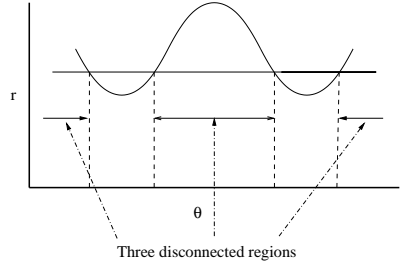} 
\par\end{centering}

\caption{Three disconnected regions for the angle ($\theta$).}

\label{fig:disconRegions} 
\end{figure}

Let $\mathcal{B}(r^{\prime})\subseteq[\omega-\beta,\omega+\beta]$
denote the integration region for $\theta^{\prime}(r^{\prime})$.
Treating $\theta^{\prime}$ as a function of $r^{\prime}$, the integral
$J_{jk}^{(1)}(\beta,\omega)$ can be rewritten as 
\begin{equation}
J_{jk}^{(1)}(\beta,\omega)=\int_{r_{k}^{(1)}(\beta,\omega)}^{r_{k}^{(2)}(\beta,\omega)}G_{jk}(r^{\prime},\omega)dr^{\prime},\label{eq:J1jk}
\end{equation}
 where 
\begin{eqnarray}
r_{k}^{(1)}(\beta,\omega) & = & \inf\{R_{k}^{(1)}(\theta^{\prime})\},\,\mathrm{and}\nonumber \\
r_{k}^{(2)}(\beta,\omega) & = & \sup\{R_{k}^{(2)}(\theta^{\prime})\}\label{eq:rk1andrk2}
\end{eqnarray}
 with $\theta^{\prime}\in[\omega-\beta,\omega+\beta]$ and 
\begin{equation}
G_{jk}(r^{\prime},\omega)=\frac{1}{\sqrt{2\pi\tau}}\int_{\mathcal{B}(r^{\prime})}\exp\left(\frac{ip(r^{\prime},\theta^{\prime},\omega)}{\tau}\right)q(r^{\prime},\theta^{\prime},\omega)d\theta^{\prime}.
\end{equation}
 Note that while evaluating the integral $G_{jk}(r^{\prime},\omega)$,
$r^{\prime}$ and $\omega$ are held fixed. As contributions to $G_{jk}(r^{\prime},\omega)$
come only from the stationary points of $p(r^{\prime},\theta^{\prime},\omega)$
(with $r^{\prime}$ and $\omega$ held fixed) as $\tau\rightarrow0$,
we evaluate $\frac{\partial p}{\partial\theta^{\prime}}=-r^{\prime}\sin(\theta^{\prime}-\omega)$
and for it to vanish, we require $\theta^{\prime}=\omega$. Moreover
\begin{eqnarray}
\left.\frac{\partial^{2}p}{\partial\theta^{\prime2}}\right\vert_{\omega} & = & -r^{\prime},\nonumber \\
p(r^{\prime},\omega,\omega) & = & 0,\mbox{ and}\nonumber \\
q(r^{\prime},\omega,\omega) & = & r^{\prime}\sqrt{r^{\prime}-\alpha_{jk}(\omega)}.
\end{eqnarray}
For the given $r^{\prime}$, if $\omega\notin\mathcal{B}(r^{\prime})$,
no stationary points exist. Using integration by parts, $G_{jk}(r^{\prime},\omega)$
can be shown to be $\epsilon_{3}(r^{\prime},\omega,\tau)=O(\sqrt{\tau})$,
which can be uniformly bounded by a function of $r^{\prime}$ and
$\omega$ for small values of $\tau$.

If $\omega\in\mathcal{B}(r^{\prime})$, then using the one dimensional
stationary phase approximation \cite{OlverBook74,OlverArticle74},
it can be shown that 
\begin{equation}
G_{jk}(r^{\prime},\omega)=\exp\left(\frac{-i\pi}{4}\right)\sqrt{r^{\prime}}\sqrt{r^{\prime}-\alpha_{jk}(\omega)}+\epsilon_{4}(r^{\prime},\omega,\tau),
\end{equation}
 where $\epsilon_{4}(r^{\prime},\omega,\tau)$ can be uniformly bounded
by a function of $r^{\prime}$ and $\omega$ for small values of $\tau$
and converges to zero as $\tau\rightarrow0$. Here, we assume that
the stationary point $\theta^{\prime}=\omega$ lies in the \emph{interior}
of $\mathcal{B}(r^{\prime})$ and not on the boundary as there can
be at most finite (actually 2) values of $r^{\prime}$ (with Lebesgue
measure zero) for which $\theta^{\prime}=\omega$ can lie on the boundary
of $\mathcal{B}(r^{\prime})$.

\subsection*{Computing the integral over $\omega$ and $r^{\prime}$}

Let $r_{k}^{(-)}(\beta,\omega)\geq r_{k}^{(1)}(\beta,\omega)$ and
$r_{k}^{(+)}(\beta,\omega)\leq r_{k}^{(2)}(\beta,\omega)$ be the
values of $r^{\prime}$ such that when $r_{k}^{(-)}(\beta,\omega)<r^{\prime}<r_{k}^{(+)}(\beta,\omega)$,
the stationary point $\theta^{\prime}=\omega$ lies in the interior
of $\mathcal{B}(r^{\prime})$. Substituting the value of $G_{jk}(r^{\prime},\omega)$
into Equation~\ref{eq:J1jk} and using the definitions of $\eta_{jk}(\omega)$
and $\alpha_{jk}(\omega)$ from Equation~\ref{eqn:symdefns}, we
get 
\begin{equation}
\int_{\omega_{0}}^{\omega_{0}+\Delta}\frac{\eta_{jk}(\omega)}{L^{\epsilon}}J_{jk}^{(1)}(\beta,\omega)d\omega=\psi_{jk}(\beta)+\int_{\omega_{0}}^{\omega_{0}+\Delta}\frac{\eta_{jk}(\omega)}{L^{\epsilon}}\left\{ \int_{r_{k}^{(1)}(\beta,\omega)}^{r_{k}^{(2)}(\beta,\omega)}\chi(r^{\prime},\omega,\tau)\hspace{3pt}dr^{\prime}\right\} d\omega,\label{eq:J1jk_2}
\end{equation}
 where 
\begin{equation}
\psi_{jk}(\beta)=\frac{1}{{L^{\epsilon}}}\int_{\omega_{0}}^{\omega_{0}+\Delta}\exp\left(\frac{-i\alpha_{jk}(\omega)}{\tau}\right)\int_{r_{k}^{(-)}(\beta,\omega)}^{r_{k}^{(+)}(\beta,\omega)}\sqrt{r^{\prime}}\sqrt{r^{\prime}-\alpha_{jk}(\omega)}dr^{\prime}d\omega
\end{equation}
 and 
\[
\chi(r^{\prime},\omega,\tau)=\left\{ \begin{array}{cc}
\epsilon_{4}(r^{\prime},\omega,\tau), & r_{k}^{(-)}(\beta,\omega)<r^{\prime}<r_{k}^{(+)}(\beta,\omega),\\
\epsilon_{3}(r^{\prime},\omega,\tau), & r^{\prime}<r_{k}^{(-)}(\beta,\omega)\mbox{ or }r_{k}^{(+)}(\beta,\omega)<r^{\prime}.
\end{array}\right.
\]

Since $|\eta_{jk}(\omega)|=1$ and $\chi(r^{\prime},\omega,\tau)$
can be uniformly bounded by a function $r^{\prime}$ and $\omega$
for small values of $\tau$, by the Lebesgue dominated convergence
theorem we have 
\begin{eqnarray}
 &  &
  \lim_{\tau\rightarrow0}\int_{\omega_{0}}^{\omega_{0}+\Delta}\frac{\eta_{jk}(\omega)}{L^{\epsilon}}\left\{
  \int_{r_{k}^{(1)}(\beta,\omega)}^{r_{k}^{(2)}(\beta,\omega)}\chi(r^{\prime},\omega,\tau)\hspace{3pt}dr^{\prime}\right\}
  d\omega\nonumber \\ 
 &  &
  =\int_{\omega_{0}}^{\omega_{0}+\Delta}\frac{\eta_{jk}(\omega)}{L^{\epsilon}}\left\{
  \int_{r_{k}^{(1)}(\beta,\omega)}^{r_{k}^{(2)}(\beta,\omega)}\lim_{\tau\rightarrow0}\chi(r^{\prime},\omega,\tau)\hspace{3pt}dr^{\prime}\right\}
  d\omega=0. 
\end{eqnarray}
 This leaves us having to prove the following result:
\begin{lemma}
\label{lemma:psijkequalsP} 
\begin{equation}
\sum_{j=1}^{K}\sum_{k=1}^{K}\lim_{\beta\rightarrow0}\lim_{\tau\rightarrow0}\psi_{jk}(\beta)=\int_{\omega_{0}}^{\omega_{0}+\Delta}
P(\omega)d\omega.  
\end{equation}
 
\end{lemma}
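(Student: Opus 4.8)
The plan is to split the double sum over $j$ and $k$ into the diagonal terms $j=k$, which I expect will reproduce the gradient density $P$, and the off-diagonal terms $j\neq k$, which I expect to vanish in the iterated limit. The observation driving this split is that the oscillatory factor $\exp(-i\alpha_{jk}(\omega)/\tau)$ in $\psi_{jk}(\beta)$ is governed by $\alpha_{jk}(\omega)=\cos(\omega)(x_j-x_k)+\sin(\omega)(y_j-y_k)$: on the diagonal this phase is identically zero, whereas off the diagonal it is a genuinely non-constant sinusoid, since $Y_j\neq Y_k$ forces $(x_j-x_k,y_j-y_k)\neq(0,0)$, and hence oscillates ever faster as $\tau\to0$.

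For the diagonal terms, $\alpha_{kk}(\omega)\equiv0$, so the phase factor is $1$ and there is no $\tau$-dependence; the inner limit $\lim_{\tau\to0}$ is vacuous. The amplitude collapses to $\sqrt{r'}\sqrt{r'-0}=r'$, so the inner integral is elementary:
\begin{equation}
\int_{r_k^{(-)}(\beta,\omega)}^{r_k^{(+)}(\beta,\omega)} r'\,dr'=\frac{1}{2}\left[\bigl(r_k^{(+)}(\beta,\omega)\bigr)^2-\bigl(r_k^{(-)}(\beta,\omega)\bigr)^2\right].
\end{equation}
As $\beta\to0$ the radial window $[r_k^{(-)}(\beta,\omega),r_k^{(+)}(\beta,\omega)]$ over which the stationary angle $\theta'=\omega$ stays interior to $\mathcal{B}(r')$ expands, by continuity of the boundary functions, to the full radial extent $[\epsilon R_k(\omega),(1-\epsilon)R_k(\omega)]$ of the ray at angle $\omega$; in particular $r_k^{(-)}(\beta,\omega)\to\epsilon R_k(\omega)$ and $r_k^{(+)}(\beta,\omega)\to(1-\epsilon)R_k(\omega)$. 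Passing the limit inside the $\omega$-integral by dominated convergence (the integrand is bounded by a constant multiple of $\max_\theta R_k^2(\theta)$ on the finite interval) gives $\lim_{\beta\to0}\psi_{kk}(\beta)=\frac{1}{L^\epsilon}\int_{\omega_0}^{\omega_0+\Delta}\frac{(1-\epsilon)^2-\epsilon^2}{2}R_k^2(\omega)\,d\omega$. Since $(1-\epsilon)^2-\epsilon^2=1-2\epsilon$ and $L^\epsilon=(1-2\epsilon)L$, the $(1-2\epsilon)$ factors cancel, and summing over $k$ together with the closed form $P(\omega)=\frac{1}{L}\sum_k R_k^2(\omega)/2$ from Equation~\ref{def:densityfunc} yields exactly $\int_{\omega_0}^{\omega_0+\Delta}P(\omega)\,d\omega$.

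For the off-diagonal terms I would fix $\beta$ and show $\lim_{\tau\to0}\psi_{jk}(\beta)=0$; since this holds for every fixed $\beta$, the subsequent $\lim_{\beta\to0}$ is immediate. Writing $A_{jk}(\beta,\omega)$ for the inner $r'$-integral, which is bounded in $\omega$ for each fixed $\beta$, $\psi_{jk}(\beta)$ becomes the oscillatory integral $\frac{1}{L^\epsilon}\int_{\omega_0}^{\omega_0+\Delta}e^{-i\alpha_{jk}(\omega)/\tau}A_{jk}(\beta,\omega)\,d\omega$. Writing $\alpha_{jk}(\omega)=\rho_{jk}\cos(\omega-\phi_{jk})$ with $\rho_{jk}>0$, the phase has only isolated, non-degenerate critical points: where $\alpha_{jk}'$ vanishes one has $\alpha_{jk}''=\mp\rho_{jk}\neq0$, so at every point of the interval at least one of $|\alpha_{jk}'|,|\alpha_{jk}''|$ is bounded below. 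The plan is to cover $[\omega_0,\omega_0+\Delta]$ by finitely many subintervals and apply the stationary phase estimate, equivalently van der Corput's lemma, on each, yielding $\psi_{jk}(\beta)=O(\sqrt{\tau})\to0$.

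I expect this off-diagonal estimate to be the main obstacle. The difficulty is twofold: one must establish enough regularity of the amplitude $A_{jk}(\beta,\omega)$ (continuity and bounded variation in $\omega$, uniformly for small $\tau$) so that the oscillatory-integral bound applies across the finitely many piecewise-linear boundary transitions, and one must be careful that the square-root amplitude $\sqrt{r'-\alpha_{jk}(\omega)}$ is real and contributes only where $r'>\alpha_{jk}(\omega)$, consistent with the vanishing of $g_{jk}$ for $r_{jk}\le0$ recorded in Lemma~\ref{lemma:gjk}. Pinning down the order of limits---taking $\tau\to0$ first for each fixed $\beta$, then $\beta\to0$---is precisely what makes the cancellation of these cross phase factors rigorous, exactly as anticipated in the intuitive discussion.
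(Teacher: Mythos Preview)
Your proposal is correct and follows essentially the same approach as the paper: the diagonal/off-diagonal split, the vacuous $\tau$-limit and elementary computation for $j=k$ followed by $\beta\to0$ via dominated convergence and the cancellation of the $(1-2\epsilon)$ factors, and the one-dimensional stationary phase (equivalently van der Corput) argument for $j\neq k$ based on the non-degeneracy of the critical points of $\alpha_{jk}(\omega)$. Your polar-form observation $\alpha_{jk}(\omega)=\rho_{jk}\cos(\omega-\phi_{jk})$ with $\rho_{jk}>0$ is exactly the paper's check that $\alpha_{jk}''(\tilde\omega)=-\alpha_{jk}(\tilde\omega)\neq0$, just phrased more directly.
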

\noindent The proof of this lemma is given in Appendix~\ref{sec:psijkequalsP}.
This completes the proof of Theorem~\ref{IntegralLimitTheorem}.

We would like to give a short recap of our proof. Beginning with the
definition of $I(\omega)$ in Equation~\ref{def:Idef1}, Lemma~\ref{lemma:gjk}
and the statements following it lead to the relation~\ref{eq:Idef4},
namely, 
\begin{equation}
\lim_{\tau\rightarrow0}\int_{\omega_{0}}^{\omega_{0}+\Delta}I(\omega)d\omega=\sum_{j=1}^{K}\sum_{k=1}^{K}\lim_{\tau\rightarrow0}\int_{\omega_{0}}^{\omega_{0}+\Delta}\frac{\eta_{jk}(\omega)}{L^{\epsilon}}I_{jk}^{(1)}(\omega)d\omega.
\end{equation}
 From Lemma~\ref{lemma:Ijk1equalsJjk1}, it follows that 
\begin{eqnarray}
 &  &
  \sum_{j=1}^{K}\sum_{k=1}^{K}\lim_{\tau\rightarrow0}\int_{\omega_{0}}^{\omega_{0}+\Delta}\frac{\eta_{jk}(\omega)}{L^{\epsilon}}
  I_{jk}^{(1)}(\omega)d\omega=\nonumber
  \\ 
 &  &
  \sum_{j=1}^{K}\sum_{k=1}^{K}\lim_{\beta\rightarrow0}\lim_{\tau\rightarrow0}\int_{\omega_{0}}^{\omega_{0}+\Delta}\frac{\eta_{jk}
    (\omega)}{L^{\epsilon}}J_{jk}^{(1)}(\beta,\omega)d\omega. 
\end{eqnarray}
 Interchanging the order of integration between $r^{\prime}$ and
$\theta^{\prime}$, we showed that 
\begin{equation}
\sum_{j=1}^{K}\sum_{k=1}^{K}\lim_{\beta\rightarrow0}\lim_{\tau\rightarrow0}\int_{\omega_{0}}^{\omega_{0}+\Delta}\frac{\eta_{jk}(\omega)}{L^{\epsilon}}J_{jk}^{(1)}(\beta,\omega)d\omega=\sum_{j=1}^{K}\sum_{k=1}^{K}\lim_{\beta\rightarrow0}\lim_{\tau\rightarrow0}\psi_{jk}(\beta).
\end{equation}
 Finally, the application of Lemma~\ref{lemma:psijkequalsP} gives
the desired result of Theorem~\ref{IntegralLimitTheorem}, namely,
\begin{eqnarray}
\lim_{\tau\rightarrow0}\int_{\omega_{0}}^{\omega_{0}+\Delta}\int_{1-\delta}^{1+\delta}P_{\tau}^{\epsilon}(\tilde{r},\omega)\tilde{r}d\tilde{r}d\omega & = & \lim_{\tau\rightarrow0}\int_{\omega_{0}}^{\omega_{0}+\Delta}I(\omega)d\omega\nonumber \\
 & = & \sum_{j=1}^{K}\sum_{k=1}^{K}\lim_{\beta\rightarrow0}\lim_{\tau\rightarrow0}\psi_{jk}(\beta)\nonumber \\
 & = & \int_{\omega_{0}}^{\omega_{0}+\Delta}P(\omega)d\omega.
\end{eqnarray}

\section{Results Stemming from the Main Theorem}

As an implication of Theorem~\ref{IntegralLimitTheorem}, we have
the following corollary. 
\begin{corollary}
\label{DensityCorollary} For any given $0<\delta<1$, $\omega_{0}\in[0,2\pi)$,
\begin{equation}
\lim_{\epsilon\rightarrow0}\lim_{\Delta\rightarrow0}\frac{1}{\Delta}\lim_{\tau\rightarrow0}\int_{\omega_{0}}^{\omega_{0}+\Delta}
\left\{
\int_{1-\delta}^{1+\delta}P_{\tau}^{\epsilon}(\tilde{r},\omega)\tilde{r}\hspace{3pt}d\tilde{r}\right\}
d\omega=P(\omega_{0}). 
\end{equation}
 \end{corollary}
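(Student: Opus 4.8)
The plan is to peel the three nested limits off from the inside out, exploiting the fact that Theorem~\ref{IntegralLimitTheorem} already disposes of the delicate $\tau\rightarrow0$ limit and, crucially, returns a quantity that no longer depends on $\epsilon$ or $\delta$. First I would apply Theorem~\ref{IntegralLimitTheorem} verbatim to the innermost limit: for the fixed data $0<\epsilon<\frac{1}{2}$, $0<\delta<1$, $\omega_{0}$ and $\Delta$,
\begin{equation}
\lim_{\tau\rightarrow0}\int_{\omega_{0}}^{\omega_{0}+\Delta}\left\{\int_{1-\delta}^{1+\delta}P_{\tau}^{\epsilon}(\tilde{r},\omega)\tilde{r}\,d\tilde{r}\right\}d\omega=\int_{\omega_{0}}^{\omega_{0}+\Delta}P(\omega)\,d\omega.
\end{equation}
The essential observation is that the right-hand side depends on \emph{neither} $\epsilon$ \emph{nor} $\delta$; the entire $\epsilon$- and $\delta$-dependence has been absorbed by the $\tau\rightarrow0$ limit. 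Consequently the two remaining (outer) limits act on an $\epsilon$-free, $\delta$-free object, and the apparent interleaving of limits is illusory.

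Next I would treat the $\Delta\rightarrow0$ averaging limit. After the substitution above, the remaining task is to evaluate
\begin{equation}
\lim_{\Delta\rightarrow0}\frac{1}{\Delta}\int_{\omega_{0}}^{\omega_{0}+\Delta}P(\omega)\,d\omega.
\end{equation}
This is the one-sided running average of $P$ over a shrinking interval, which converges to $P(\omega_{0})$ provided $P$ is (right-)continuous at $\omega_{0}$. I would establish this continuity directly from the closed-form expression~\eqref{def:densityfunc}, $P(\theta)=\frac{1}{L}\sum_{k=1}^{K}\frac{R_{k}^{2}(\theta)}{2}$, using the geometry of the Voronoi cells: each $\mathcal{D}_{k}$ is a convex polygon containing its site $Y_{k}$ in the interior with boundary made of finitely many line segments, so the radial function $R_{k}(\theta)$ is strictly positive and continuous in $\theta$ (piecewise of the form $\mathrm{const}/\cos(\theta-\phi_{i})$, with only finitely many kinks at the orientations subtended by the polygon vertices). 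Hence $P$, a finite sum of such continuous functions, is continuous on all of $[0,2\pi)$, and the fundamental theorem of calculus gives the limit value $P(\omega_{0})$.

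Finally, the outermost $\epsilon\rightarrow0$ limit is vacuous, precisely because the intermediate value produced by the previous two steps is already $\epsilon$-independent: $\lim_{\epsilon\rightarrow0}P(\omega_{0})=P(\omega_{0})$, which is the asserted identity. This is also where one sees the interpretive payoff — the limit $\epsilon\rightarrow0$ formally restores the full domain $\Omega$ from the punctured domains $\Omega^{\epsilon}$, yet the density value is unchanged.

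The main obstacle I anticipate is organizational rather than analytic: one must notice and justify that the three limits decouple, so that the corollary reduces to Theorem~\ref{IntegralLimitTheorem} followed by a single differentiation. The one genuinely substantive point is the continuity of $P$ at $\omega_{0}$ needed to pass from the averaged integral to the pointwise value; this rests entirely on the regularity of the radial functions $R_{k}$ for convex polygonal Voronoi cells. Since $P$ remains continuous even at the finitely many orientations where $R_{k}$ has a kink (continuity is not lost there, only differentiability), the conclusion holds for \emph{every} $\omega_{0}\in[0,2\pi)$, with no exceptional set to exclude.
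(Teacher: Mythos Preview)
Your proposal is correct and follows essentially the same route as the paper: apply Theorem~\ref{IntegralLimitTheorem} to evaluate the inner $\tau\rightarrow0$ limit, then handle the $\Delta\rightarrow0$ averaging, and finally observe that the $\epsilon\rightarrow0$ limit is vacuous because the intermediate expression no longer depends on $\epsilon$. The only minor difference is in the justification of the $\Delta\rightarrow0$ step: the paper simply invokes the \emph{definition}~\eqref{def:densityfunc}, where $P(\omega_{0})$ was introduced precisely as $\lim_{\Delta\rightarrow0}\frac{1}{\Delta}\mathcal{F}(\omega_{0}\leq\Theta\leq\omega_{0}+\Delta)$, so no separate continuity argument is needed; you instead establish continuity of $P$ directly from the closed-form expression and the polygonal geometry of the Voronoi cells, which is a slightly more explicit but equivalent justification.
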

\begin{proof}
From Equation~\ref{def:densityfunc}, we have 
\begin{equation}
\lim_{\Delta\rightarrow0}\frac{1}{\Delta}\int_{\omega_{0}}^{\omega_{0}+\Delta}P(\omega)d\omega=\lim_{\Delta\rightarrow0}\frac{F(
  \omega_{0}\leq\omega\leq\omega_{0}+\Delta)}{\Delta}=P(\omega_{0}).
\end{equation}
 Since Theorem~\ref{IntegralLimitTheorem} is true for any $0<\epsilon<\frac{1}{2}$,
it also holds as $\epsilon\rightarrow0$. The result then follows
immediately. 

\end{proof}
Theorem~\ref{IntegralLimitTheorem} also entails the following lemma. 
\begin{lemma}
\label{IntegralLimitLemma} For any given $0<\epsilon<\frac{1}{2}$,
$0<\delta<1$, 
\begin{equation}
\lim_{\tau\rightarrow0}\int_{0}^{2\pi}\int_{1-\delta}^{1+\delta}P_{\tau}^{\epsilon}(\tilde{r},\omega)\tilde{r}d\tilde{r}d\omega=1.
\end{equation}
 \end{lemma}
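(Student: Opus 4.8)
The plan is to derive this lemma directly from Theorem~\ref{IntegralLimitTheorem} together with the normalization property~\eqref{eq:PInt}, with no new analysis required. The only technical wrinkle is that Theorem~\ref{IntegralLimitTheorem} is stated for an orientation interval of measure $0<\Delta<2\pi$ \emph{strictly less} than $2\pi$, so I cannot simply instantiate it with $\omega_{0}=0$ and $\Delta=2\pi$ to cover the full circle in one shot. I would circumvent this by partitioning $[0,2\pi)$ into finitely many arcs, each of admissible measure.

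Concretely, with $\epsilon$ and $\delta$ held fixed throughout, I would first split the integration range at $\omega=\pi$, writing
\begin{equation}
\int_{0}^{2\pi}\int_{1-\delta}^{1+\delta}P_{\tau}^{\epsilon}(\tilde{r},\omega)\tilde{r}\,d\tilde{r}\,d\omega=\sum_{i=1}^{2}\int_{(i-1)\pi}^{i\pi}\int_{1-\delta}^{1+\delta}P_{\tau}^{\epsilon}(\tilde{r},\omega)\tilde{r}\,d\tilde{r}\,d\omega,
\end{equation}
which is just additivity of the $\omega$-integral for each fixed $\tau$. Each of the two arcs $[0,\pi]$ and $[\pi,2\pi]$ has length $\pi$, satisfying $0<\pi<2\pi$, so Theorem~\ref{IntegralLimitTheorem} applies to each with $(\omega_{0},\Delta)=(0,\pi)$ and $(\pi,\pi)$ respectively. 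Because this is a finite sum of two terms and each term individually converges as $\tau\rightarrow0$, I would pull the limit inside the sum, obtaining
\begin{equation}
\lim_{\tau\rightarrow0}\int_{0}^{2\pi}\int_{1-\delta}^{1+\delta}P_{\tau}^{\epsilon}(\tilde{r},\omega)\tilde{r}\,d\tilde{r}\,d\omega=\int_{0}^{\pi}P(\omega)\,d\omega+\int_{\pi}^{2\pi}P(\omega)\,d\omega=\int_{0}^{2\pi}P(\omega)\,d\omega.
\end{equation}
Invoking the normalization~\eqref{eq:PInt}, namely $\int_{0}^{2\pi}P(\omega)\,d\omega=1$, then closes the argument.

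The main ``obstacle'' here is genuinely minor and purely bookkeeping: it is only the strict inequality $\Delta<2\pi$ in the hypothesis of Theorem~\ref{IntegralLimitTheorem}, which forbids a one-step application to the whole circle. Since interchanging a limit with a \emph{finite} sum is unconditionally valid once each summand converges, no dominated-convergence or uniformity estimate is needed, and the partition into two arcs of length $\pi$ suffices (any finite partition into arcs of length below $2\pi$ would do equally well). The interpretive payoff is that the total mass of the power spectrum collected in an annulus of width $2\delta$ about the unit circle tends to $1$ as $\tau\rightarrow0$, confirming that, in the limit, all of the spectral energy concentrates arbitrarily close to $\tilde{r}=1$, consistent with Theorem~\ref{CircleTheorem}.
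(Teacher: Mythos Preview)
Your proof is correct and follows the same route as the paper: deduce the lemma from Theorem~\ref{IntegralLimitTheorem} together with the normalization~\eqref{eq:PInt}. The paper in fact applies the theorem directly with $\omega_{0}=0$, $\Delta=2\pi$ (glossing over the strict inequality $\Delta<2\pi$ in its hypotheses), whereas your two-arc partition respects that hypothesis more carefully; otherwise the arguments are identical.
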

\begin{proof}
Since the result shown in Theorem~\ref{IntegralLimitTheorem} holds
good for any $\omega_{0}$ and $\Delta$, we may choose $\omega_{0}=0$
and $\Delta=2\pi$. Using Equation~\ref{eq:PInt} the result follows
immediately as 
\begin{equation}
\lim_{\tau\rightarrow0}\int_{0}^{2\pi}\int_{1-\delta}^{1+\delta}P_{\tau}^{\epsilon}(\tilde{r},\omega)\tilde{r}d\tilde{r}d\omega=\int_{0}^{2\pi}P(\omega)d\omega=1.
\end{equation}
 
\end{proof}
\noindent Lemmas~\ref{IntegralLimitLemma} and \ref{IntegralLemma}
leads to the following corollaries. 
\begin{corollary}
\label{ZeroCorollaryEntireInt} For any given $0<\epsilon<\frac{1}{2}$,
$0<\delta<1$, 
\begin{equation}
\lim_{\tau\rightarrow0}\int_{0}^{2\pi}\left\{
\int_{0}^{1-\delta}P_{\tau}^{\epsilon}(\tilde{r},\omega)\hspace{3pt}\tilde{r}d\tilde{r}+\int_{1+\delta}^{\infty}
P_{\tau}^{\epsilon}(\tilde{r},\omega)\hspace{3pt}\tilde{r}d\tilde{r}\right\}d\omega=0. 
\end{equation}
 \end{corollary}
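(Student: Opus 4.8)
The plan is to exploit conservation of total mass. By Lemma~\ref{IntegralLemma} (equivalently, the normalization identity recorded immediately after its statement), the total radial mass of the power spectrum is pinned at unity for \emph{every} $\tau>0$:
\begin{equation}
\int_{0}^{2\pi}\int_{0}^{\infty}P_{\tau}^{\epsilon}(\tilde{r},\omega)\tilde{r}\,d\tilde{r}\,d\omega=1.
\end{equation}
Since $P_{\tau}^{\epsilon}\ge0$ everywhere, I may split the radial integral additively across the three ranges $[0,1-\delta]$, $[1-\delta,1+\delta]$ and $[1+\delta,\infty)$; each of the three resulting quantities is a well-defined nonnegative number, and their sum is exactly $1$ for all $\tau$.

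Writing $B_{\tau}$ for the middle piece $\int_{0}^{2\pi}\int_{1-\delta}^{1+\delta}P_{\tau}^{\epsilon}\tilde{r}\,d\tilde{r}\,d\omega$ and $E_{\tau}$ for the quantity appearing in the corollary (the sum of the two outer pieces), mass conservation gives $E_{\tau}=1-B_{\tau}$ for every $\tau>0$. Lemma~\ref{IntegralLimitLemma} supplies $\lim_{\tau\to0}B_{\tau}=1$, so passing to the limit $\tau\to0$ yields $\lim_{\tau\to0}E_{\tau}=1-1=0$, which is precisely the assertion.

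There is no genuine analytic obstacle here: the entire argument is the elementary observation that a fixed total mass, all but a vanishing fraction of which concentrates on the annulus $1-\delta\le\tilde{r}\le1+\delta$ as $\tau\to0$ (by Theorem~\ref{CircleTheorem} and its quantitative consequence, Lemma~\ref{IntegralLimitLemma}), must leave asymptotically no mass on the complement. The only point deserving a word of care is the additive splitting of the improper $\tilde{r}$-integral; this is legitimate because $P_{\tau}^{\epsilon}\ge0$ makes every sub-integral a well-defined element of $[0,\infty]$, and the finiteness of the total forces each to be finite. Nonnegativity also guarantees that $E_{\tau}\ge0$ throughout, so the computed limit is honestly $0$ and not merely the vanishing of a signed difference.
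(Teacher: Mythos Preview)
Your proof is correct and follows essentially the same approach as the paper: invoke the unit total mass from Lemma~\ref{IntegralLemma}, split the radial integral into the three ranges, and subtract the contribution of Lemma~\ref{IntegralLimitLemma} to conclude that the outer pieces vanish in the limit. Your added remark justifying the additive splitting via nonnegativity is a small but welcome clarification beyond what the paper spells out.
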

\begin{proof}
From Lemma~\ref{IntegralLemma}, we have for any $\tau>0$ and $0<\epsilon<\frac{1}{2}$,
\begin{equation}
\int_{0}^{2\pi}\int_{0}^{\infty}P_{\tau}^{\epsilon}(\tilde{r},\omega)\tilde{r}d\tilde{r}d\omega=1.
\end{equation}
 For the given $0<\delta<1$, dividing the integral range $(0,\infty)$
for $\tilde{r}$ into three disjoint regions namely $(0,1-\delta)$,
$[1-\delta,1+\delta]$ and $(1+\delta,\infty)$ and letting $\tau\rightarrow0$,
we have

\[
\lim_{\tau\rightarrow0}\int_{0}^{2\pi}\left\{ \int_{0}^{1-\delta}P_{\tau}^{\epsilon}(\tilde{r},\omega)\hspace{3pt}\tilde{r}d\tilde{r}+\int_{1-\delta}^{1+\delta}P_{\tau}^{\epsilon}(\tilde{r},\omega)\hspace{3pt}\tilde{r}d\tilde{r}+\int_{1+\delta}^{\infty}P_{\tau}^{\epsilon}(\tilde{r},\omega)\hspace{3pt}\tilde{r}d\tilde{r}\right\} d\omega=1.
\]

Pursuant to Lemma~\ref{IntegralLimitLemma}, the limit 
\begin{equation}
\lim_{\tau\rightarrow0}\int_{0}^{2\pi}\int_{1-\delta}^{1+\delta}P_{\tau}^{\epsilon}(\tilde{r},\omega)\hspace{3pt}\tilde{r}d\tilde{r}d\omega
\end{equation}
 \emph{exists} and equals 1. The result then follows.  

\end{proof}
\begin{corollary}
\label{ZeroCorollaryFiniteInt} For any given $0<\epsilon<\frac{1}{2}$,
$0<\delta<1$, $\omega_{0}\in[0,2\pi)$ and $0<\Delta<2\pi$, 
\begin{equation}
\lim_{\tau\rightarrow0}\int_{\omega_{0}}^{\omega_{0}+\Delta}\left\{ \int_{0}^{1-\delta}P_{\tau}^{\epsilon}(\tilde{r},\omega)\hspace{3pt}\tilde{r}d\tilde{r}+\int_{1+\delta}^{\infty}P_{\tau}^{\epsilon}(\tilde{r},\omega)\hspace{3pt}\tilde{r}d\tilde{r}\right\} d\omega=0.
\end{equation}
 \end{corollary}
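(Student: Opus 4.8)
The plan is to deduce this finite-wedge statement from the already-established full-circle version in Corollary~\ref{ZeroCorollaryEntireInt} by exploiting the nonnegativity of the integrand. First I would define, for fixed $\tau$ and $\omega$, the angular mass lying outside the annulus,
\begin{equation}
h_\tau(\omega)\equiv\int_0^{1-\delta}P_\tau^\epsilon(\tilde{r},\omega)\,\tilde{r}\,d\tilde{r}+\int_{1+\delta}^\infty P_\tau^\epsilon(\tilde{r},\omega)\,\tilde{r}\,d\tilde{r}.
\end{equation}
Since $P_\tau^\epsilon(\tilde{r},\omega)\geq0$ by definition (Equation~\ref{def:Ph}), we have $h_\tau(\omega)\geq0$ for every $\omega$, and its integrability over a period is guaranteed by Lemma~\ref{IntegralLemma}. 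The quantity whose limit we must compute is exactly $\int_{\omega_0}^{\omega_0+\Delta}h_\tau(\omega)\,d\omega$, while Corollary~\ref{ZeroCorollaryEntireInt} asserts precisely that $\lim_{\tau\rightarrow0}\int_0^{2\pi}h_\tau(\omega)\,d\omega=0$.

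Next I would observe that $P_\tau^\epsilon(\tilde{r},\omega)$, being built from the polar representation $u=\tilde{r}\cos(\omega)$, $v=\tilde{r}\sin(\omega)$, is $2\pi$-periodic in $\omega$; hence so is $h_\tau$. Consequently, for any wedge $[\omega_0,\omega_0+\Delta]$ with $0<\Delta<2\pi$, the nonnegativity of $h_\tau$ yields the squeeze
\begin{equation}
0\leq\int_{\omega_0}^{\omega_0+\Delta}h_\tau(\omega)\,d\omega\leq\int_0^{2\pi}h_\tau(\omega)\,d\omega,
\end{equation}
where, when $\omega_0+\Delta>2\pi$, the left-hand integral is first split at $2\pi$ and reassembled by periodicity into two disjoint subintervals of a single period, so the bound persists. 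Letting $\tau\rightarrow0$ and invoking Corollary~\ref{ZeroCorollaryEntireInt} drives the right-hand side to zero; by the squeeze theorem the limit of the middle term therefore exists and equals zero, which is the assertion.

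The only point requiring care --- and it is minor --- is the periodic wraparound just described, together with the logical remark that the squeeze simultaneously supplies both the existence and the value of $\lim_{\tau\rightarrow0}\int_{\omega_0}^{\omega_0+\Delta}h_\tau$. I do not anticipate any genuine analytic obstacle, since all of the delicate stationary-phase estimates have already been absorbed into Theorem~\ref{IntegralLimitTheorem} and its downstream corollary. As an alternative that sidesteps nonnegativity, one could write the full radial integral $\int_0^\infty$ over the wedge as the sum of $h_\tau$ and the annular term $\int_{1-\delta}^{1+\delta}$ and combine this with Theorem~\ref{IntegralLimitTheorem}; but that route would additionally demand control of the wedge-restricted total mass $\int_{\omega_0}^{\omega_0+\Delta}\int_0^\infty P_\tau^\epsilon\,\tilde{r}\,d\tilde{r}\,d\omega$, which is not directly furnished (only the full-circle normalization of Lemma~\ref{IntegralLemma} is), so the nonnegativity argument is decisively cleaner.
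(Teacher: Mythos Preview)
Your argument is correct and uses the same two ingredients as the paper's proof: the nonnegativity of $P_\tau^\epsilon$ and Corollary~\ref{ZeroCorollaryEntireInt}. The paper partitions $[0,2\pi)$ into $M=\lfloor 2\pi/\Delta\rfloor$ consecutive wedges of width $\Delta$ plus a remainder, writes the full-circle integral as a sum of the corresponding nonnegative pieces, and concludes that each piece (in particular the one over $[\omega_0,\omega_0+\Delta]$) must tend to zero since the sum does. Your direct squeeze $0\le\int_{\omega_0}^{\omega_0+\Delta}h_\tau\le\int_0^{2\pi}h_\tau\to 0$ is the same idea expressed more economically, avoiding the unnecessary partition; it also handles the wraparound case with the periodicity remark you supplied, which the paper's modular indexing accomplishes in parallel fashion.
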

\begin{proof}
Let $M=\lfloor\frac{2\pi}{\Delta}\rfloor$. Define $\omega_{i+1}\equiv\omega_{i}+\Delta\bmod2\pi$
for $0\leq i\leq M-1$. Then, we have from Corollary~\ref{ZeroCorollaryEntireInt},
\begin{equation}
\lim_{\tau\rightarrow0}\left[\sum_{i=0}^{M-1}\int_{\omega_{i}}^{\omega_{i+1}}\mathcal{Q}(\omega)d\omega+\int_{\omega_{i+1}}^{\omega_{0}+2\pi}\mathcal{Q}(\omega)d\omega\right]=0,\label{eq:integralsplit}
\end{equation}
 where 
\begin{equation}
\mathcal{Q}(\omega)=\int_{0}^{1-\delta}P_{\tau}^{\epsilon}(\tilde{r},\omega)\hspace{3pt}\tilde{r}d\tilde{r}+\int_{1+\delta}^{\infty}P_{\tau}^{\epsilon}(\tilde{r},\omega)\hspace{3pt}\tilde{r}d\tilde{r}.
\end{equation}
 Since $P_{\tau}^{\epsilon}(\tilde{r},\omega)\tilde{r}\geq0$, it
follows that $\mathcal{Q}(\omega)$ and both integrals in Equation~\ref{eq:integralsplit}
are \emph{non-negative} and hence each integral converges to zero \emph{independently}
giving us the desired result. 

\end{proof}
\noindent From Theorem~\ref{IntegralLimitTheorem} and Corollaries~\ref{DensityCorollary}
and \ref{ZeroCorollaryFiniteInt}, the subsequent results follow almost
immediately. 
\begin{proposition}
\label{IntegrationOverrtilde} For any given $0<\epsilon<\frac{1}{2}$,
$\omega_{0}\in[0,2\pi)$ and $0<\Delta<2\pi$, 
\begin{equation}
\lim_{\tau\rightarrow0}\int_{\omega_{0}}^{\omega_{0}+\Delta}\left\{ \int_{0}^{\infty}P_{\tau}^{\epsilon}(\tilde{r},\omega)\tilde{r}\hspace{3pt}d\tilde{r}\right\} d\omega=\int_{\omega_{0}}^{\omega_{0}+\Delta}P(\omega)d\omega.
\end{equation}
 \end{proposition}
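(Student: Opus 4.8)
The plan is to derive Proposition~\ref{IntegrationOverrtilde} by decomposing the radial integral over $(0,\infty)$ into the union of three disjoint pieces and handling each separately using results already established. Specifically, for any fixed $0<\delta<1$ I would write
\begin{equation}
\int_{0}^{\infty}P_{\tau}^{\epsilon}(\tilde{r},\omega)\tilde{r}\,d\tilde{r}=\int_{0}^{1-\delta}P_{\tau}^{\epsilon}(\tilde{r},\omega)\tilde{r}\,d\tilde{r}+\int_{1-\delta}^{1+\delta}P_{\tau}^{\epsilon}(\tilde{r},\omega)\tilde{r}\,d\tilde{r}+\int_{1+\delta}^{\infty}P_{\tau}^{\epsilon}(\tilde{r},\omega)\tilde{r}\,d\tilde{r}.
\end{equation}
Integrating over $\omega\in[\omega_{0},\omega_{0}+\Delta]$ and taking $\lim_{\tau\to 0}$, the outer (two tail) contributions vanish by Corollary~\ref{ZeroCorollaryFiniteInt}, while the central contribution converges to $\int_{\omega_{0}}^{\omega_{0}+\Delta}P(\omega)\,d\omega$ by Theorem~\ref{IntegralLimitTheorem}. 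So morally the result is a direct splice of these two prior facts.

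The subtlety I would flag is that the decomposition above is valid only if all three $\tau$-limits exist individually, so that I am permitted to split the limit of a sum into a sum of limits. I would therefore first invoke Corollary~\ref{ZeroCorollaryFiniteInt} to assert that the limit of the tail integrals exists and equals zero, and Theorem~\ref{IntegralLimitTheorem} to assert that the limit of the central band integral exists and equals $\int_{\omega_{0}}^{\omega_{0}+\Delta}P(\omega)\,d\omega$. Once both of these constituent limits are known to exist, the limit of their sum is the sum of the limits, which is exactly the claimed value. Concretely, I would write
\begin{equation}
\lim_{\tau\rightarrow0}\int_{\omega_{0}}^{\omega_{0}+\Delta}\left\{\int_{0}^{\infty}P_{\tau}^{\epsilon}(\tilde{r},\omega)\tilde{r}\,d\tilde{r}\right\}d\omega=0+\int_{\omega_{0}}^{\omega_{0}+\Delta}P(\omega)\,d\omega+0,
\end{equation}
which is the desired equality.

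The main (very mild) obstacle is purely bookkeeping: ensuring that $\delta$ can be held fixed throughout, since both Theorem~\ref{IntegralLimitTheorem} and Corollary~\ref{ZeroCorollaryFiniteInt} hold for \emph{any} fixed $0<\delta<1$ and do not require a subsequent $\delta\to 0$ limit for this particular statement. Because the left-hand side of the Proposition contains no $\delta$ at all, I am free to pick one admissible $\delta$, carry out the three-way split at that value, and the final answer is manifestly independent of the choice of $\delta$. There is no genuine analytic content beyond the two cited results; the Proposition is essentially the observation that the full radial power spectrum, integrated over an orientation band, recovers the gradient density mass in that band, with the off-circle radial contributions being asymptotically negligible.
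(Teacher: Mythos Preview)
Your proposal is correct and matches the paper's approach: the paper states that Proposition~\ref{IntegrationOverrtilde} follows ``almost immediately'' from Theorem~\ref{IntegralLimitTheorem} and Corollary~\ref{ZeroCorollaryFiniteInt}, which is precisely your three-way radial split at a fixed $0<\delta<1$ with the tails vanishing by the corollary and the central band supplying $\int_{\omega_{0}}^{\omega_{0}+\Delta}P(\omega)\,d\omega$ by the theorem. Your explicit remark that each constituent limit exists, so that the limit of the sum equals the sum of the limits, is exactly the bookkeeping the paper leaves implicit.
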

\begin{corollary}
For any given $\omega_{0}\in[0,2\pi)$, 
\begin{equation}
\lim_{\epsilon\rightarrow0}\lim_{\Delta\rightarrow0}\frac{1}{\Delta}\lim_{\tau\rightarrow0}\int_{\omega_{0}}^{\omega_{0}+\Delta}\left\{ \int_{0}^{\infty}P_{\tau}^{\epsilon}(\tilde{r},\omega)\tilde{r}\hspace{3pt}d\tilde{r}\right\} d\omega=P(\omega_{0}).
\end{equation}

\end{corollary}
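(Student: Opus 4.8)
The plan is to reduce the entire triple-limit statement to Proposition~\ref{IntegrationOverrtilde} and then reuse the elementary differentiation argument already deployed in the proof of Corollary~\ref{DensityCorollary}. The pivotal observation is that once the innermost limit $\tau\rightarrow0$ has been resolved, the resulting expression carries \emph{no} dependence on $\epsilon$; consequently the outermost limit $\epsilon\rightarrow0$ is vacuous and can be dispatched trivially at the end.

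First I would fix $0<\epsilon<\frac{1}{2}$ and $0<\Delta<2\pi$ and invoke Proposition~\ref{IntegrationOverrtilde} to evaluate the innermost limit, obtaining
\[
\lim_{\tau\rightarrow0}\int_{\omega_{0}}^{\omega_{0}+\Delta}\left\{\int_{0}^{\infty}P_{\tau}^{\epsilon}(\tilde{r},\omega)\tilde{r}\,d\tilde{r}\right\}d\omega=\int_{\omega_{0}}^{\omega_{0}+\Delta}P(\omega)\,d\omega.
\]
The right-hand side is manifestly independent of $\epsilon$. Dividing by $\Delta$ and sending $\Delta\rightarrow0$, I would recognize $\frac{1}{\Delta}\int_{\omega_{0}}^{\omega_{0}+\Delta}P(\omega)\,d\omega$ as the difference quotient of the cumulative distribution function $\mathcal{F}$ introduced in Equation~\ref{def:distributionfunc}, so that the closed-form density relation in Equation~\ref{def:densityfunc} yields
\[
\lim_{\Delta\rightarrow0}\frac{1}{\Delta}\int_{\omega_{0}}^{\omega_{0}+\Delta}P(\omega)\,d\omega=\lim_{\Delta\rightarrow0}\frac{\mathcal{F}(\omega_{0}\leq\Theta\leq\omega_{0}+\Delta)}{\Delta}=P(\omega_{0}),
\]
precisely as in the proof of Corollary~\ref{DensityCorollary}. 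Because this value is already $\epsilon$-free, the final limit $\epsilon\rightarrow0$ leaves it unaltered, and the chain of equalities collapses to $P(\omega_{0})$.

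The only step demanding genuine care is the passage $\Delta\rightarrow0$, which implicitly requires $P$ to be continuous at $\omega_{0}$ (equivalently, $\mathcal{F}$ to be differentiable there). I expect this to be the principal---though minor---obstacle. It is benign here: from the explicit formula $P(\theta)=\frac{1}{L}\sum_{k}\frac{R_{k}^{2}(\theta)}{2}$ in Equation~\ref{def:densityfunc}, $P$ is continuous at every orientation except the finitely many angular directions coinciding with the Voronoi cell edges, where the piecewise-defined ray lengths $R_{k}(\theta)$ may jump. At all such continuity points the Newton-quotient limit is exact, so the identity holds for the generic $\omega_{0}$ of interest, exactly as Corollary~\ref{DensityCorollary} treats the matter as immediate from Equation~\ref{def:densityfunc}.

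In short, the proof is a direct transcription of the Corollary~\ref{DensityCorollary} argument with Theorem~\ref{IntegralLimitTheorem} replaced by Proposition~\ref{IntegrationOverrtilde}: the $\tau$-limit is supplied by the Proposition, the $\Delta$-limit by the fundamental relation between the distribution function $\mathcal{F}$ and its density $P$, and the $\epsilon$-limit is trivial by $\epsilon$-independence of the intermediate quantity. No new oscillatory-integral estimates are needed, since all of the stationary-phase analysis has already been absorbed into the statement of Proposition~\ref{IntegrationOverrtilde}.
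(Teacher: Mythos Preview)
Your proposal is correct and follows essentially the same route as the paper: the corollary is stated immediately after Proposition~\ref{IntegrationOverrtilde} with the remark that it ``follows almost immediately,'' and your argument---apply the Proposition for the $\tau$-limit, then repeat the $\Delta\to 0$ step from Corollary~\ref{DensityCorollary} via Equation~\ref{def:densityfunc}, then note $\epsilon$-independence---is precisely that implicit derivation made explicit. One minor remark: since each Voronoi cell is a convex polygon, the ray length $R_k(\theta)$ is in fact continuous (merely piecewise smooth) in $\theta$, so $P$ is continuous everywhere and your caveat about possible jumps is unnecessary.
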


\section{Significance of our result and concluding remarks}
\noindent The integrals 
\begin{equation}
\int_{\omega_{0}}^{\omega_{0}+\Delta}\int_{1-\delta}^{1+\delta}P_{\tau}^{\epsilon}(\tilde{r},\omega)\tilde{r}d\tilde{r}d\omega,\hspace{10pt}\int_{\omega_{0}}^{\omega_{0}+\Delta}P(\omega)d\omega
\end{equation}
give the interval measures of the density functions $P_{\tau}^{\epsilon}$
(when polled close to the unit circle $\tilde{r}=1$) and $P$ respectively.
Theorem~\ref{IntegralLimitTheorem} states that at small values of
$\tau$, both the interval measures are approximately equal, with
the difference between them being $o(1)$. Furthermore the result
is also true as $\epsilon\rightarrow0$. Recall that by definition,
$P_{\tau}^{\epsilon}$ is the normalized power spectrum of the wave
function $\phi(x,y)=\exp\left(\frac{iS(x,y)}{\tau}\right)$. Hence,
we conclude that the power spectrum of $\phi(x,y)$ when polled close
to the unit circle $\tilde{r}=1$ (as $\delta\rightarrow0$ in Theorem~\ref{IntegralLimitTheorem}),
or when integrated over $\tilde{r}$ (with reference to Proposition~\ref{IntegrationOverrtilde}),
can potentially serve as a \emph{density estimator} of the orientation
of $\nabla S$ for small values of $\tau$ and $\epsilon$. Our work
is essentially an application of the higher-order stationary phase
approximation culminating in a new density estimator. 

\subsection{Advantages of our formulation}
One of the foremost advantages of our method is that the orientation
gradient density is computed without actually determining the distance
transform gradients. Since the stationary points (as seen from the stationary
phase approximation)
capture gradient information and slot them into the corresponding
frequency bins, we can directly work with the distance function---circumventing
the need to compute its derivatives. We are \emph{not aware of any
previous work} that estimates the orientation gradient density without
first computing the gradients of the distance transform.

Recall that we furnished a closed-form expression for the
distance transform gradient density function $P(\theta)$ in
Equation~\ref{def:densityfunc}.  
While it initially appears attractive, computing the density function via the
closed-form expression is practically cumbersome as we need to first determine
the Voronoi region corresponding to each Voronoi center (source point)
$Y_{k}$ and then for each orientation direction $\theta$, compute
the ray length $R_{k}(\theta)$ from $Y_{k}$ to its Voronoi boundary
along $\theta$. These involve unwieldy manipulations of complex data structures.
On the other hand, our mathematical result
provides an easy mechanism to achieve the same task as it is computationally
faster and easier to implement. Given the $N$ sampled values $\hat{S}$
of the distance function $S$ from a point-set of cardinality $K$,
we just need to compute the fast Fourier transform of
$\exp\left(\frac{i\hat{S}(x)}{\tau}\right)$---an $O(N\log N)$
operation---and then subsequently compute 
the squared magnitude (to obtain the power spectrum)---performed in
$O(N)$. Hence the orientation density function can be 
determined in $O(N\log N)$ independent of the cardinality of the point-set
($K$). Our algorithm is computationally efficient even when $K=O(N)$.

\subsection{Possible Extensions}
The present work only deals with special kinds of distance functions, namely
those defined from a set of discrete point locations in two dimensions. Other
cases include signed distance functions, distance functions defined from a set
of curves \cite{Osher88,Sethi12} etc. While our work initially appears to be 
somewhat restrictive, we should note that as the cardinality and locations in each
point-set can be arbitrary, the resulting distance functions can be quite
complex. We strongly believe that it is possible to establish a similar
Fourier transform-based density estimation result even for \emph{arbitrary}, continuous (and differentiable)
functions in 2D. A general result of this nature would subsume our current work on point-set
distance functions as well as the other kinds of distance functions mentioned above,
as the gradient magnitude of an arbitrary 2D function can vary across the
point locations and need not necessarily be identically equal to one as in the
case of distance functions. Here the gradient density function will inherently
be two dimensional and defined both along the radial, gradient magnitude
direction $\left(r = \sqrt{S_x^2+S_y^2}\right)$ and the orientation
$\left(\theta = \arctan\left(\frac{S_y}{S_x}\right)\right)$. However, at the
present time, due to many technical issues, 
this is merely a conjecture and requires further investigation. Generalization
of the present work 
to three dimensions is also a concrete possibility. These are fruitful
avenues for future research and we may explore them in the years to
come. 

\appendix
\section{\label{sec:proofIntLemma}Proof of Lemma~\ref{IntegralLemma}}
\begin{proof}
Define a function $H(x,y)$ by 
\[
H(x,y)\equiv\left\{ \begin{array}{ll}
1: & \mbox{if }(x,y)\in\Omega^{\epsilon};\\
0: & \mbox{otherwise}.
\end{array}\right.
\]
 Let $f(x,y)=H(x,y)\exp\left(\frac{iS(x,y)}{\tau}\right)$. Then,
\begin{equation}
F_{\tau}^{\epsilon}(u,v)=\frac{1}{2\pi\tau l^{\epsilon}}\iint f(x,y)\exp\left(\frac{-i(ux+vy)}{\tau}\right)dxdy.
\end{equation}
 Let $\frac{u}{\tau}=s$, $\frac{v}{\tau}=t$ and $G(s,t)=F_{\tau}^{\epsilon}(s\tau,t\tau)$.
Then, 
\begin{equation}
\tau l^{\epsilon}G(s,t)=\frac{1}{2\pi}\iint f(x,y)\exp\left(-i(sx+ty)\right)dxdy.
\end{equation}
 Since $f$ is $\ell^{1}$ integrable, by Parseval's theorem \cite{Bracewell99},
we have
\begin{equation}
\iint\left|f(x,y)\right|^{2}dxdy=\iint\left|\tau l^{\epsilon}G(s,t)\right|^{2}dsdt=(\tau l^{\epsilon})^{2}\iint\left|F_{\tau}^{\epsilon}(s\tau,t\tau)\right|^{2}dsdt.
\end{equation}
 Letting $u=s\tau$, $v=t\tau$ and observing that 
\begin{equation}
\iint\left|f(x,y)\right|^{2}dxdy=
\iint\limits _{\Omega^{\epsilon}}\left|\exp\left(\frac{iS(x,y)}{\tau}\right)\right|^{2}dxdy=L^{\epsilon},
\end{equation}
 we get 
\begin{equation}
(l^{\epsilon})^{2}\iint\left|F_{\tau}^{\epsilon}(u,v)\right|^{2}dudv=L^{\epsilon}.
\end{equation}
 Hence 
\begin{equation}
\iint\left|F_{\tau}^{\epsilon}(u,v)\right|^{2}dudv=1
\end{equation}
 which completes the proof. 

\end{proof}

\section{\label{sec:pitfalls5Dstationaryphase}Difficulty with the 5D stationary
phase approximation}

Since $P_{\tau}^{\epsilon}(\tilde{r},\omega)$ equals $F_{\tau}^{\epsilon}(\tilde{r},\omega)\overline{F_{\tau}^{\epsilon}(\tilde{r},\omega)}$,
we have 
\begin{equation}
I(\omega)=\sum_{j=1}^{K}\sum_{k=1}^{K}\frac{1}{(2\pi\tau l^{\epsilon})^{2}}N_{jk}(\omega),
\end{equation}
 where 
\begin{equation}
N_{jk}(\omega)=\int_{1-\delta}^{1+\delta}\int_{0}^{2\pi}\int_{R_{k}^{(1)}(\theta^{\prime})}^{R_{k}^{(2)}(\theta^{\prime})}\int_{0}^{2\pi}\int_{R_{k}^{(1)}(\theta)}^{R_{k}^{(2)}(\theta)}\exp\left(\frac{i}{\tau}b_{jk}\right)f_{1}drd\theta dr^{\prime}d\theta^{\prime}d\tilde{r}.
\end{equation}
 Here, 
\begin{eqnarray}
b_{jk}(r,\theta,r^{\prime},\theta^{\prime},\tilde{r};\omega) & = & r\left[1-\tilde{r}\cos(\theta-\omega)\right]-r^{\prime}\left[1-\tilde{r}\cos(\theta^{\prime}-\omega)\right]\nonumber \\
 &  & -\tilde{r}\left[\cos(\omega)(x_{j}-x_{k})+\sin(\omega)(y_{j}-y_{k})\right]
\end{eqnarray}
 and 
\begin{equation}
f_{1}(r,r^{\prime},\tilde{r})=rr^{\prime}\tilde{r}.
\end{equation}
 Notice that the phase term of the quantity $C_{j}\overline{C_{k}}$,
namely 
\begin{equation}
-\tilde{r}\left[\cos(\omega)(x_{j}-x_{k})+\sin(\omega)(y_{j}-y_{k})\right]\label{eq:phasetermCjCk}
\end{equation}
 is absorbed in $b_{jk}$. Since we are interested only in the limit
as $\tau\rightarrow0$, essential contribution to $N_{jk}(\omega)$
comes only from the stationary (critical) point(s) of $b_{jk}$ \cite{Wong89}.
The partial derivatives of $b_{jk}(r,\theta,r^{\prime},\theta^{\prime},\tilde{r})$
are given by 
\begin{eqnarray}
\frac{\partial b_{jk}}{\partial r} & = & 1-\tilde{r}\cos(\theta-\omega),\hspace{10pt}\frac{\partial b_{jk}}{\partial\theta}=r\tilde{r}\sin(\theta-\omega),\nonumber \\
\frac{\partial b_{jk}}{\partial r^{\prime}} & = &
-1+\tilde{r}\cos(\theta^{\prime}-\omega),\hspace{10pt}\frac{\partial
  b_{jk}}{\partial\theta^{\prime}}=-r^{\prime}\tilde{r}\sin(\theta^{\prime}-\omega),~\mbox{\rm
  and}\nonumber \\
\frac{\partial b_{jk}}{\partial\tilde{r}} & = & -r\cos(\theta-\omega)+r^{\prime}\cos(\theta^{\prime}-\omega)-[\cos(\omega)(x_{j}-x_{k})+\sin(\omega)(y_{j}-y_{k})].
\end{eqnarray}
 As $r$, $r^{\prime}$ and $\tilde{r}>0$, it is easy to see that
for $\nabla b_{jk}=0$ (\emph{stationary}), we must have
\begin{equation}
\tilde{r}=1,\hspace{10pt}\theta=\theta^{\prime}=\omega,r=r^{\prime}-[\cos(\omega)(x_{j}-x_{k})+\sin(\omega)(y_{j}-y_{k})].
\end{equation}
 Let $t_{0}$ denote the stationary point. The Hessian matrix $\mathcal{W}$
of $b_{jk}$ at $t_{0}$ is given by 
\[
\left.\mathcal{W}(r,\theta,r^{\prime},\theta^{\prime},\tilde{r})\right\vert_{t_{0}}=\left[\begin{array}{ccccc}
0 & 0 & 0 & 0 & -1\\
0 & r_{t_{0}} & 0 & 0 & 0\\
0 & 0 & 0 & 0 & 1\\
0 & 0 & 0 & -r^{\prime} & 0\\
-1 & 0 & 1 & 0 & 0
\end{array}\right]
\]
where $r_{t_{0}}=r^{\prime}-[\cos(\omega)(x_{j}-x_{k})+\sin(\omega)(y_{j}-y_{k})]$.
Unfortunately, the determinant of $\mathcal{W}$ at the stationary
point $t_{0}$ equals $0$ as the first and third rows---corresponding
to $r$ and $r^{\prime}$ respectively---are scalar multiples of each
other.  This impedes us from directly applying the
5D stationary phase approximation \cite{Wong89}. 

The addition of a $6^{\rm th}$ integral to the above setup---where the power spectrum
is integrated over a small range on the orientation $\omega$ (in order
to remove cross phase factors)---leads to a 6D
stationary phase approximation. This is of no help either, as the Hessian
continues to remain degenerate for the same reasons as above. 

\subsection{Avoiding degeneracy by symmetry breaking}
\label{sec:symmetrybreaking}
As we notice above, the degeneracy in the 5D (and 6D) stationary phase approximation
arises because the determinant of the Hessian, namely $\mathcal{W}$, when
evaluated at the stationary point $t_0$ takes the value zero, as its first and
third rows corresponding to $r$ and $r^{\prime}$ respectively are scalar
multiples of each other. Also, observe that the value of either $r$ or
$r^{\prime}$ is not determined at the stationary point and can take on
arbitrary values. However, the rows (and columns) of $\mathcal{W}$
corresponding to the other three variables $\theta, \theta^{\prime}$ and
$\tilde{r}$ are indeed independent of each other and do not cause
degeneracy. This strongly suggests that if we \emph{do not} consider both $r$
and $r^{\prime}$ together and hold back either one of them, say $r^{\prime}$,
the resulting 4D stationary phase approximation will be well-defined. Since
the integration range for $r^{\prime}$ is defined in terms of $\theta^{\prime}$,
we retain both these variables and perform the stationary phase approximation
on the other three variables. This manual breaking of symmetry avoids the degeneracy
issue. 

\section{\label{sec:prooflemmagjk} Proof of Lemma~\ref{lemma:gjk}}
\begin{proof}
Recall that the essential contribution to $g_{jk}(r^{\prime},\theta^{\prime};\omega)$
comes only from the stationary points of $\gamma_{jk}$ as $\tau\rightarrow0$
\cite{Wong89}. The partial derivatives of $\gamma_{jk}(r,\theta,\tilde{r};r^{\prime},\theta^{\prime},\omega)$
are given by 
\begin{eqnarray}
\frac{\partial\gamma_{jk}}{\partial r} & = &
1-\tilde{r}\cos(\theta-\omega),\hspace{10pt}\frac{\partial\gamma_{jk}}{\partial\theta}=r\tilde{r}\sin(\theta-\omega),
~\mbox{\rm and}
\nonumber\\ 
\frac{\partial\gamma_{jk}}{\partial\tilde{r}} & = &
-r\cos(\theta-\omega)+r^{\prime}\cos(\theta^{\prime}-\omega)-[\cos(\omega)(x_{j}-x_{k})+\sin(\omega)(y_{j}-y_{k})]. 
\end{eqnarray}
 As both $r$ and $\tilde{r}>0$, for $\nabla\gamma_{jk}=0$
(\emph{stationary}), we must have 
\begin{eqnarray}
\tilde{r} & = & 1,\hspace{5pt}\theta=\omega,\hspace{5pt}\mbox{and}\nonumber \\
r & = & r^{\prime}\cos(\theta^{\prime}-\omega)-[\cos(\omega)(x_{j}-x_{k})+\sin(\omega)(y_{j}-y_{k})].
\end{eqnarray}
 Let $t_{0}$ denote a stationary point. Then 
\begin{eqnarray*}
\gamma_{jk}(t_{0}) & = & r^{\prime}\cos(\theta^{\prime}-\omega)-[\cos(\omega)(x_{j}-x_{k})+\sin(\omega)(y_{j}-y_{k})]=r_{jk}(r^{\prime},\theta^{\prime};\omega),\\
f_{2}(t_{0}) & = & r_{jk}(r^{\prime},\theta^{\prime};\omega)
\end{eqnarray*}
 and the Hessian matrix $\mathcal{H}$ of $\gamma_{jk}$ at the stationary
point $t_{0}$ is 
\[
\left.\mathcal{H}(r,\theta,\tilde{r})\right\vert_{t_{0}}=\left[\begin{array}{ccc}
0 & 0 & -1\\
0 & r_{jk}(r^{\prime},\theta^{\prime};\omega) & 0\\
-1 & 0 & 0
\end{array}\right].
\]
 It can be easily verified that the determinant of $\mathcal{H}$
equals $-r_{jk}(r^{\prime},\theta^{\prime})$.

If $r_{jk}(r^{\prime},\theta^{\prime};\omega)\leq0$, no stationary points
exist as $r>0$ by definition and hence $g_{jk}(r^{\prime},\theta^{\prime};\omega)=0$
as $\tau\rightarrow0$ \cite{Wong89}. If $r_{jk}(r^{\prime},\theta^{\prime};\omega)>0$,
the determinant of $\mathcal{H}$ is strictly negative and its
signature--the difference
between the number of positive and negative eigenvalues--is 1. Then,
from the higher-order stationary phase approximation \cite{Wong89},
we have 
\begin{eqnarray*}
g_{jk}(r^{\prime},\theta^{\prime};\omega) & = & (2\pi\tau)^{\frac{3}{2}}\sqrt{r_{jk}(r^{\prime},\theta^{\prime};\omega)}\exp\left(\frac{ir_{jk}(r^{\prime},\theta^{\prime};\omega)}{\tau}+\frac{i\pi}{4}\right)+\epsilon_{1}(r^{\prime},\theta^{\prime},\tau;\omega)
\end{eqnarray*}
 as $\tau\rightarrow0$, where $\epsilon_{1}(r^{\prime},\theta^{\prime},\tau;\omega)$
includes the contributions from the boundary in Equation~\ref{def:g}.
Here we have assumed that the stationary point $t_{0}$ does not occur
on the boundary and lies to its interior, i.e, $R_{j}^{(1)}(\theta)<r_{jk}(r^{\prime},\theta^{\prime};\omega)<R_{j}^{(2)}(\theta)$,
as the measure on the set of $\{\omega,\theta^{\prime},r^{\prime}\}$
for which $r_{jk}(r^{\prime},\theta^{\prime};\omega)$ (or $t_{0}$)
can occur on the boundary is zero.

Let $\Gamma$ denote the boundary in Equation~\ref{def:g}. If there
does not exist a 2D patch on $\Gamma$ on which $\gamma_{jk}$ is
constant, then we can conclude that $\epsilon_{1}(r^{\prime},\theta^{\prime},\tau;\omega)$---which
includes the contributions from the boundary $\Gamma$ involving the stationary
points of the second kind where the level curves of $\gamma_{jk}$
are tangential to $\Gamma$---should be at least $O(\tau^{2})$ as
$\tau\rightarrow0$ \cite{Jones58,Cooke82,Wong89}. From this, we
get 
\begin{equation}
\epsilon_{1}(r^{\prime},\theta^{\prime},\tau;\omega)=\tau^{\kappa}\xi_{jk}(r^{\prime},\theta^{\prime};\omega)
\end{equation}
 where $\kappa\geq2$ and $\xi_{jk}(r^{\prime},\theta^{\prime};\omega)$
is some bounded, continuous function. Since the boundary $\Gamma$
is made of straight line segments, we can show that this is indeed
the case. Below, we take care of this technical issue.

The boundary $\Gamma$ in Equation~\ref{def:g} is the \emph{union}
of two disconnected surfaces $\Gamma_{1}=\mathcal{A}_{1}\times[1-\delta,1+\delta]$
and $\Gamma_{2}=\mathcal{A}_{2}\times[1-\delta,1+\delta]$ where $\mathcal{A}_{1}$
is the boundary along $r(\theta)=R_{j}^{(1)}(\theta)$ and $\mathcal{A}_{2}$
is the boundary along $r(\theta)=R_{j}^{(2)}(\theta)$. Note that
both $\mathcal{A}_{1}$ and $\mathcal{A}_{2}$ are composed of a finite
sequence of straight line segments. Consider the surface $\Gamma_{1}$.
The value of $\gamma_{jk}$ on the surface $\Gamma_{1}$ at a given
$\theta$ and $\tilde{r}$ (with $r^{\prime}$, $\theta^{\prime}$
and $\omega$ held fixed) equals 
\begin{equation}
\gamma_{jk}^{\Gamma_{1}}(\theta,\tilde{r};r^{\prime},\theta^{\prime},\omega)=R_{j}^{(1)}(\theta)[1-\tilde{r}\cos(\theta-\omega)]+\tilde{r}r_{jk}(r^{\prime},\theta^{\prime};\omega).
\end{equation}
 Following the lines of Theorem~\ref{CircleTheorem}, we observe
that for a given $\tilde{r}$, $\gamma_{jk}^{\Gamma_{1}}(\theta,\tilde{r};r^{\prime},\theta^{\prime},\omega)$
\emph{cannot} be constant for a contiguous interval of $\theta$ as
Equation~\ref{eq:sincoslinearrelation} cannot be satisfied over
any finite interval. By a similar argument, there can exist at most
only a \emph{finite} discrete set of $\theta$ for which $R_{j}^{(1)}(\theta)\cos(\theta-\omega)=r_{jk}(r^{\prime},\theta^{\prime};\omega)$.
Let $\mathcal{Z}$ denote this finite set. Then, for a given $\theta\notin\mathcal{Z}$,
$\gamma_{jk}^{\Gamma_{1}}$ varies linearly in $\tilde{r}$ and specifically,
its derivative with respect to $\tilde{r}$ does not vanish. From the
above observations, we can conclude that there does not exist a 2D
patch on $\Gamma_{1}$ on which $\gamma_{jk}^{\Gamma_{1}}$ is constant.
A similar conclusion can be obtained even for the surface $\Gamma_{2}$.
Hence, $\gamma_{jk}$ \emph{cannot} be constant on the boundary $\Gamma$
over a 2D region having a finite non-zero measure. 

\end{proof}

\section{\label{sec:Ijk1equalsJjk1} Proof of Lemma~\ref{lemma:Ijk1equalsJjk1}}
\begin{proof}
By construction, the integrals $J_{jk}^{(2)}(\beta,\omega)$ and $J_{jk}^{(3)}(\beta,\omega)$
\emph{do not} include the stationary point $\theta^{\prime}=\omega$
and hence $\nabla p\not=0$ in these integrals. Following
the lines of Theorem~\ref{CircleTheorem}, by defining the vector
field $\mathbf{u}=\frac{\nabla p}{\|\nabla p\|^{2}}q$
and then applying the divergence theorem, both $J_{jk}^{(2)}(\beta,\omega)$
and $J_{jk}^{(3)}(\beta,\omega)$ can be shown to be $\tau^{\kappa_{2}}\zeta^{(2)}(\beta,\omega)$
and $\tau^{\kappa_{3}}\zeta^{(3)}(\beta,\omega)$ respectively where
both $\kappa_{2}$ and $\kappa_{3}\geq0.5$ and $\zeta^{(2)}$ and
$\zeta^{(3)}$ are some continuous bounded functions of $\beta$ and
$\omega$. Hence, we can conclude that 
\begin{equation}
\left|\lim_{\tau\rightarrow0}\int_{0}^{2\pi}\frac{\eta_{jk}}{L^{\epsilon}}J_{jk}^{(2)}(\beta,\omega)d\omega\right|\leq\lim_{\tau\rightarrow0}\frac{\tau^{\kappa_{2}}}{L^{\epsilon}}\int_{0}^{2\pi}\left|\zeta^{(2)}(\beta,\omega)\right|d\omega=0
\end{equation}
 as $|\eta_{jk}=1|$ and similarly for $J_{jk}^{(3)}(\beta,\omega)$
for $any$ fixed $\beta>0$. It follows that the result also holds
as $\beta\rightarrow0$ provided the limit for $\beta$ is considered
after the limit for $\tau$, i.e, 
\begin{eqnarray}
\lim_{\beta\rightarrow0}\lim_{\tau\rightarrow0}\int_{\omega_{0}}^{\omega_{0}+\Delta}\frac{\eta_{jk}}{L^{\epsilon}}J_{jk}^{(2)}(\beta,\omega)d\omega & = & 0,\,\mathrm{and}\nonumber \\
\lim_{\beta\rightarrow0}\lim_{\tau\rightarrow0}\int_{\omega_{0}}^{\omega_{0}+\Delta}\frac{\eta_{jk}}{L^{\epsilon}}J_{jk}^{(3)}(\beta,\omega)d\omega & = & 0.
\end{eqnarray}

\noindent Hence, $I_{jk}^{(1)}(\omega)$ in Equation~\ref{eq:I1jk}
can be approximated by $J_{jk}^{(1)}(\beta,\omega)$ as $\beta\rightarrow0$
and as $\tau\rightarrow0$. 

\end{proof}

\section{\label{sec:psijkequalsP} Proof of Lemma~\ref{lemma:psijkequalsP}}
\begin{proof}
Define 
\begin{equation}
\rho_{jk}(\beta,\omega)=\int_{r_{k}^{(-)}(\beta,\omega)}^{r_{k}^{(+)}(\beta,\omega)}\sqrt{r^{\prime}}\sqrt{r^{\prime}-\alpha_{jk}(\omega)}dr^{\prime}.
\end{equation}
 We consider two cases, one in which $j=k$ and another in which
$j\not=k$.

case(i): If $j\not=k$, then $\alpha_{jk}(\omega)$ varies continuously
with $\omega$. Also, notice that $\rho_{jk}(\beta,\omega)$ is \emph{independent}
of $\tau$ and is also a bounded function of $\beta$ and $\omega$.
The stationary point(s) of $\alpha_{jk}$---denoted by $\tilde{\omega}$---satisfy
\begin{equation}
\tan(\tilde{\omega})=\frac{y_{j}-y_{k}}{x_{j}-x_{k}},\label{eq:alphastationarypt}
\end{equation}
 and the second derivative of $\alpha_{jk}(\omega)$ at its stationary
point(s) is given by 
\begin{equation}
\alpha_{jk}^{\prime\prime}(\tilde{\omega})=-\alpha_{jk}(\tilde{\omega}).
\end{equation}
 For $\alpha_{jk}^{\prime\prime}(\tilde{\omega})=0$, we must have
\begin{equation}
\tan(\tilde{\omega})=-\frac{x_{j}-x_{k}}{y_{j}-y_{k}}=\frac{y_{j}-y_{k}}{x_{j}-x_{k}},
\end{equation}
 where the last equality is obtained using Equation~\ref{eq:alphastationarypt}.
Rewriting, we get 
\begin{equation}
\left(\frac{y_{j}-y_{k}}{x_{j}-x_{k}}\right)^{2}=-1
\end{equation}
 which cannot be true. Since the second derivative cannot vanish at
the stationary point $\tilde{\omega}$, from the one-dimensional stationary
phase approximation \cite{OlverBook74}, we have 
\begin{equation}
\lim_{\tau\rightarrow0}\frac{1}{L^{\epsilon}}\int_{\omega_{0}}^{\omega_{0}+\Delta}\exp\left(\frac{-i\alpha_{jk}(\omega)}{\tau}\right)\rho_{jk}(\beta,\omega)d\omega=\lim_{\tau\rightarrow0}O(\tau^{\kappa})=0
\end{equation}
 where $\kappa=0.5$ or 1 depending upon whether the interval $[\omega_{0},\omega_{0}+\Delta)$
contains the stationary point ($\tilde{\omega}$) or not. Hence, we
have $\psi_{jk}(\beta)=0$ for $j\not=k$.

case(ii): If $j=k$, then $\alpha_{kk}(\omega)=0$ and 
\begin{eqnarray}
\rho_{kk}(\beta,\omega) & = & \int_{r_{k}^{(-)}(\beta,\omega)}^{r_{k}^{(+)}(\beta,\omega)}r^{\prime}dr^{\prime},\nonumber \\
\psi_{kk}(\beta) & = & \frac{1}{{L^{\epsilon}}}\int_{\omega_{0}}^{\omega_{0}+\Delta}\rho_{kk}(\beta,\omega)d\omega.
\end{eqnarray}
 From the definitions of $r_{k}^{(1)}(\beta,\omega)$ and $r_{k}^{(2)}(\beta,\omega)$
in Equation~\ref{eq:rk1andrk2}, we observe that 
\begin{eqnarray}
\lim_{\beta\rightarrow0}r_{k}^{(1)}(\beta,\omega) & \uparrow &
R_{k}^{(1)}(\omega),~\mbox{\rm and}\nonumber \\
\lim_{\beta\rightarrow0}r_{k}^{(2)}(\beta,\omega) & \downarrow & R_{k}^{(2)}(\omega).
\end{eqnarray}
 Since $r_{k}^{(-)}(\beta,\omega)\rightarrow r_{k}^{(1)}(\beta,\omega)$
and $r_{k}^{(+)}(\beta,\omega)\rightarrow r_{k}^{(2)}(\beta,\omega)$
as $\beta\rightarrow0$, we have 
\begin{eqnarray}
\lim_{\beta\rightarrow0}r_{k}^{(-)}(\beta,\omega) & = & R_{k}^{(1)}(\omega),\mbox{ and }\nonumber \\
\lim_{\beta\rightarrow0}r_{k}^{(+)}(\beta,\omega) & = & R_{k}^{(2)}(\omega).
\end{eqnarray}
 Since $r_{k}^{(-)}(\beta,\omega)\geq r_{k}^{(1)}(\beta,\omega)$
and $r_{k}^{(+)}(\beta,\omega)\leq r_{k}^{(2)}(\beta,\omega)$ at
a fixed $\beta$ and $r^{\prime}>0$, we see that $\rho_{kk}(\beta,\omega)$
can be bounded from above by a positive \emph{decreasing} function
of $\beta$, namely,
\begin{equation}
\rho_{kk}(\beta,\omega)\leq\int_{r_{k}^{(1)}(\beta,\omega)}^{r_{k}^{(2)}(\beta,\omega)}r^{\prime}dr^{\prime},
\end{equation}
 and is also independent of $\tau$. As both $r_{k}^{(1)}(\beta,\omega)$
and $r_{k}^{(2)}(\beta,\omega)$ are also bounded functions, by the
Lebesgue dominated convergence theorem, we get 
\begin{eqnarray}
\lim_{\beta\rightarrow0}\lim_{\tau\rightarrow0}\psi_{kk}(\beta) & = & \frac{1}{L^{\epsilon}}\int_{\omega_{0}}^{\omega_{0}+\Delta}\lim_{\beta\rightarrow0}\rho_{kk}(\beta,\omega)d\omega\nonumber \\
 & = & \frac{1}{L^{\epsilon}}\int_{\omega_{0}}^{\omega_{0}+\Delta}\left\{ \int_{R_{k}^{(1)}(\omega)}^{R_{k}^{(2)}(\omega)}r^{\prime}dr^{\prime}\right\} d\omega\nonumber \\
 & = & \frac{(1-2\epsilon)}{L^{\epsilon}}\int_{\omega_{0}}^{\omega_{0}+\Delta}\frac{R_{k}^{2}(\omega)}{2}d\omega.
\end{eqnarray}
 Recall that $L^{\epsilon}=(1-2\epsilon)L$. Hence, 
\begin{eqnarray}
\sum_{j=1}^{K}\sum_{k=1}^{K}\lim_{\beta\rightarrow0}\lim_{\tau\rightarrow0}\psi_{jk}(\beta) & = & \frac{1}{L}\sum_{k=1}^{K}\int_{\omega_{0}}^{\omega_{0}+\Delta}\frac{R_{k}^{2}(\omega)}{2}d\omega\nonumber \\
 & = & \int_{\omega_{0}}^{\omega_{0}+\Delta}P(\omega)d\omega
\end{eqnarray}
 which completes the proof. 

\end{proof}
\noindent \bibliographystyle{siam}
\bibliography{DisTransDensityEstimation}

\end{document}